\title{Low-Rank Curvature for Zeroth-Order Optimization in LLM Fine-Tuning}
\author{%
  Hyunseok Seung$^{1}$ \quad
  Jaewoo Lee$^{2}$ \quad
  Hyunsuk Ko$^{3}$ \\
  $^{1}$University of Wisconsin -- Madison \quad
  $^{2}$University of Georgia \quad
  $^{3}$Hanyang University \\
  \texttt{hseung2@wisc.edu}, \ \texttt{jaewoo.lee@uga.edu}, \ \texttt{hyunsuk@hanyang.ac.kr}
}
\let\en=\ensuremath
\newcommand{\dd}[1]{\mathop{}\!\mathrm{d} #1}
\DeclareMathOperator{\E}{\mathbb{E}}          
\DeclareMathOperator*{\argmin}{arg\,min\,}
\DeclarePairedDelimiter{\norm}{\lVert}{\rVert}
\DeclarePairedDelimiter{\abs}{\lvert}{\rvert}
\DeclareMathOperator{\Trace}{tr}
\DeclareMathOperator{\Det}{det}
\DeclareMathOperator{\vect}{vec}                
\renewcommand{\vec}[1]{\en{\bm{\mathrm{#1}}}}
\newcommand{\tvec}[1]{\en{\widetilde{\vec{#1}}}}
\newcommand{\mat}[1]{\en{{\bm{\mathrm{#1}}}}}
\newcommand{\grad}[0]{\en{\nabla}}
\newcommand{\R}[0]{\mathbb{R}}
\newcommand{\pdv}[2]{\frac{\partial #1}{\partial #2}}
\renewcommand{\th}[0]{\textsuperscript{th}\xspace}
\newcommand{\blue}[1]{\textcolor{blue}{#1}}
\newcommand{\loren}[0]{\textsc{LOREN}\xspace}
\theoremstyle{plain}
\newtheorem{theorem}{Theorem}[section]
\newtheorem{proposition}[theorem]{Proposition}
\newtheorem{lemma}[theorem]{Lemma}
\theoremstyle{definition}
\newtheorem{definition}[theorem]{Definition}
\newtheorem{assumption}[theorem]{Assumption}
\theoremstyle{remark}
\begin{document}

\maketitle

\begin{abstract}
We introduce \loren, a curvature-aware zeroth-order (ZO) optimization method
for fine-tuning large language models (LLMs). Existing ZO methods,
which estimate gradients via finite differences using random
perturbations, often suffer from high variance and suboptimal search
directions.
Our approach addresses these challenges by: (i) reformulating the problem of gradient
preconditioning as that of adaptively estimating an 
anisotropic perturbation distribution for gradient estimation,
(ii) capturing curvature through a low-rank block diagonal preconditioner using the framework of natural evolution strategies, and
(iii) applying a REINFORCE leave-one-out (RLOO) gradient estimator to reduce 
variance. 
Experiments on standard LLM benchmarks show that our method outperforms state-of-the-art ZO methods by achieving higher accuracy and faster convergence, while cutting peak memory usage by up to 27.3\% compared with MeZO-Adam.


\end{abstract}

Code is available at \href{https://github.com/hseung88/loren}{https://github.com/hseung88/loren}.
\section{Introduction}
\label{sec:introduction}
Fine-tuning large language models (LLMs) with first-order (FO) methods such as
SGD~\cite{robbins1951Stochastic} and
AdamW~\cite{kingma2015adam,Loshchilov2019DecoupledWD}
incurs significant memory overhead primarily due to 
gradient computations during backpropagation.
To address this limitation, there has been
renewed interest in developing zeroth-order (ZO) optimization
methods for LLM fine-tuning.
%
%
%
Recent ZO optimizers, such as MeZO~\cite{malladi2023finetuning},
estimate gradient using only forward-pass evaluations of the model,
eliminating the need to store intermediate activations or perform
backpropagation for gradient computation, thereby significantly
reducing memory requirements. The low memory footprint makes ZO
optimizers particularly appealing for LLM fine-tuning and recent
studies~\cite{malladi2023finetuning,chen2025enhancing} have shown promising results.
%
%

Despite the memory efficiency, existing ZO optimizers exhibit slow
convergence rates
due to two fundamental limitations.
First, the finite-difference gradient estimators employed in ZO methods
suffer from high variance, 
particularly in high-dimensional 
stochastic settings. This high variance leads to noisy gradient
approximations, resulting in unstable parameter updates 
and 
degraded optimization performance~\cite{Nesterov2017RandomGM,Ghadimi2013StochasticFA}.
In the absence of variance reduction techniques, the sample complexity
measured in terms of function evaluations scales poorly with model
dimensionality~\cite{Duchi2013OptimalRF,Nesterov2017RandomGM,gautam2024variancereduced}.
%
Second, existing ZO optimizers are agnostic to the anisotropic
curvature of loss landscapes in LLMs, i.e., they fail to adapt
to curvature heterogeneity across different weights and layers. This
lack of curvature awareness can lead to 
optimization inefficiencies (e.g., oscillations in high-curvature directions or stagnation
along nearly flat directions) and may even result in convergence to saddle
points~\cite{zhao2025secondorder}. 

\begin{figure*}[tb]
    \centering
    \begin{subfigure}{0.37\textwidth}
        \centering
        \includegraphics[width=\textwidth]{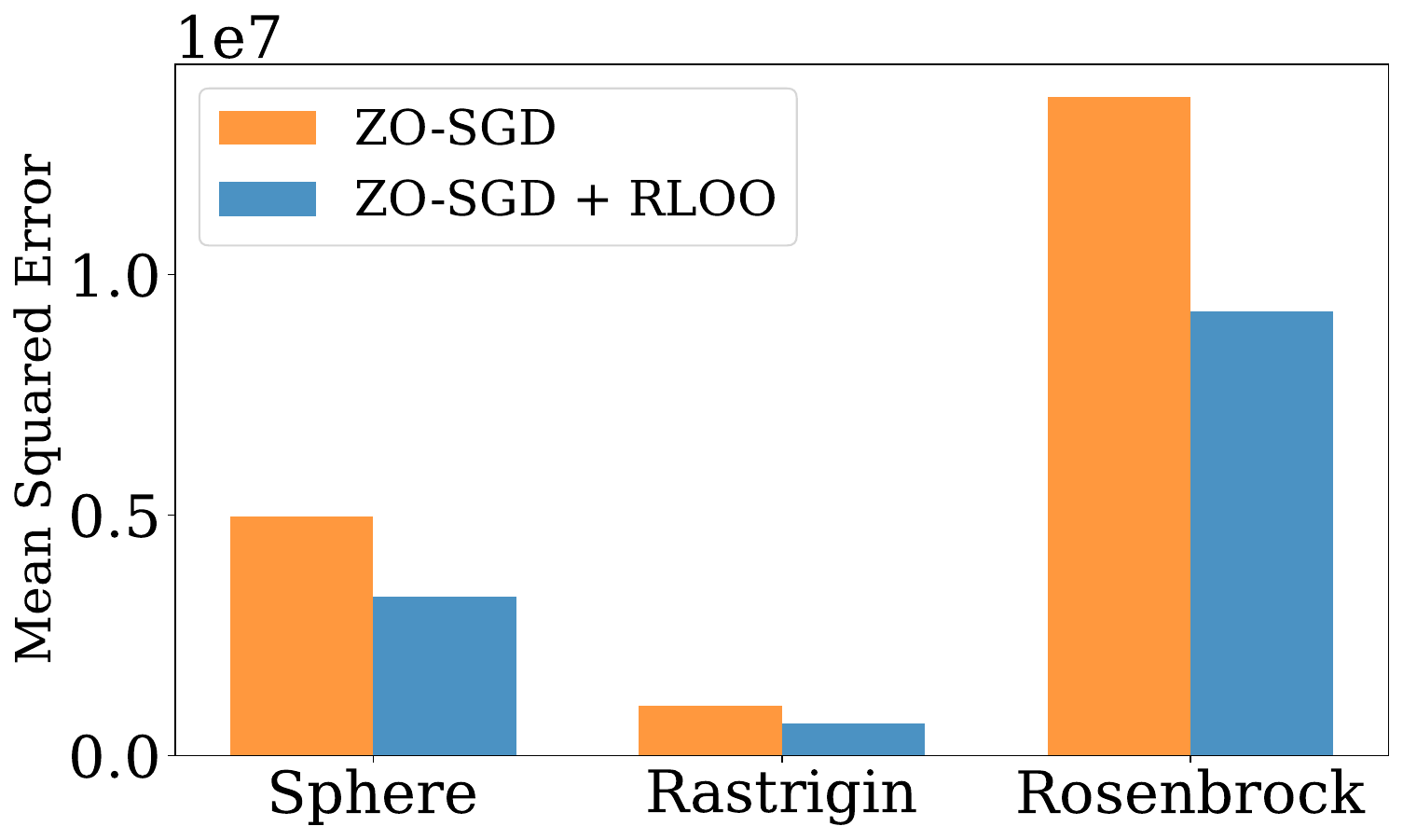}
        \subcaption{}
        \label{fig:optim_demo_a}
    \end{subfigure}
    \qquad
    \begin{subfigure}{0.34\textwidth}
        \centering
        \includegraphics[width=\textwidth]{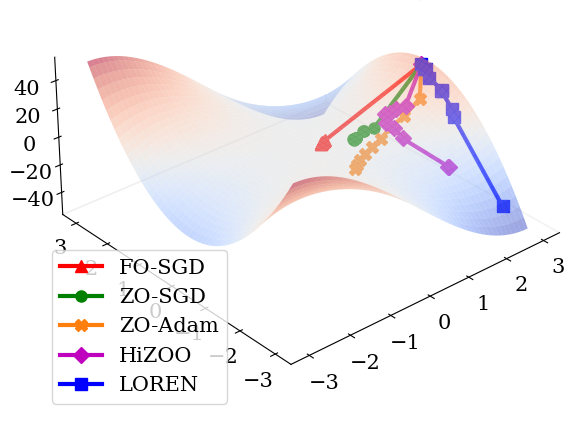}
        \subcaption{}
        \label{fig:optim_demo_b}
    \end{subfigure}
    \begin{subfigure}{0.36\textwidth}
        \centering
        \includegraphics[width=\textwidth]{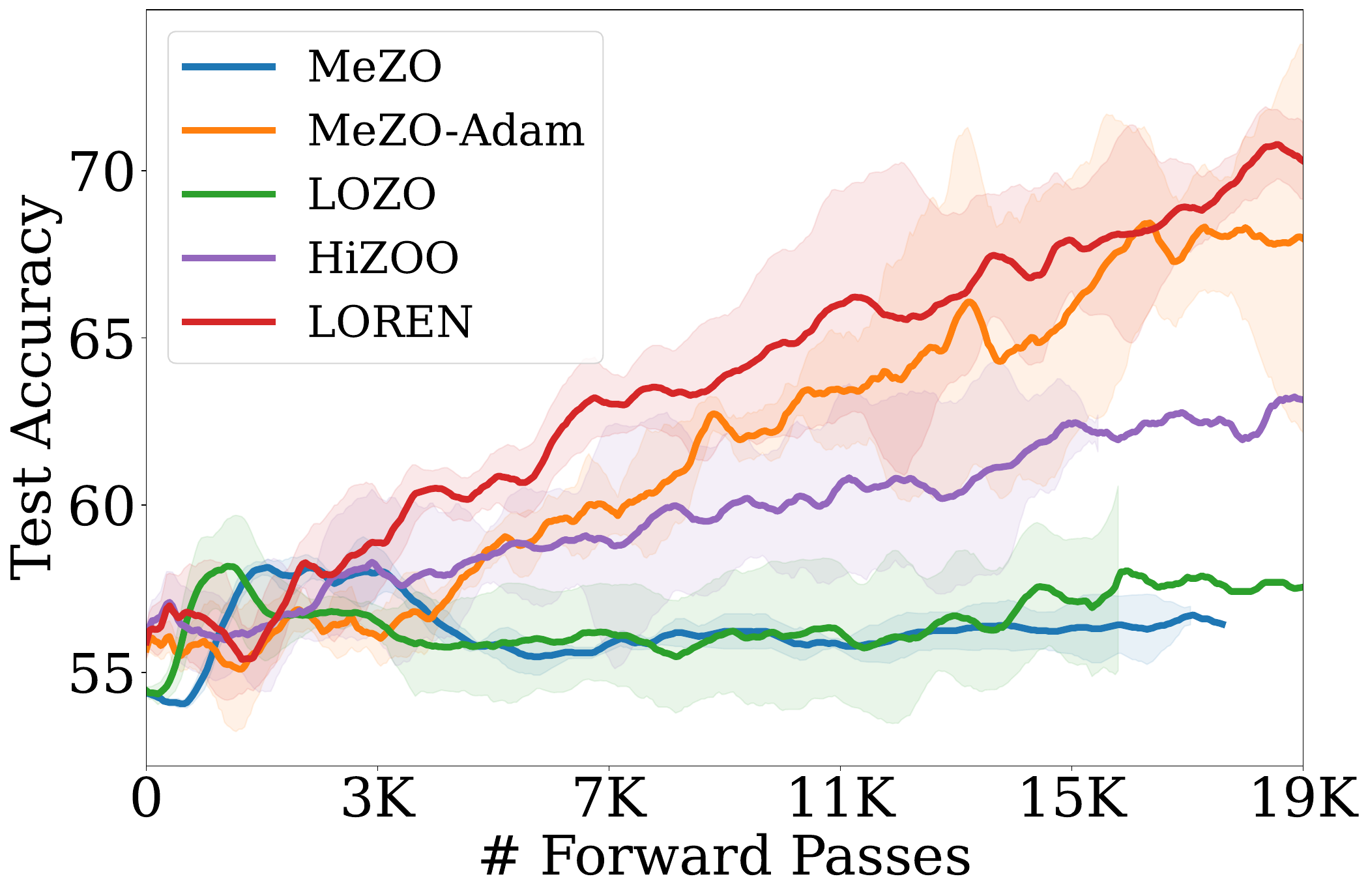}
        \subcaption{}
        \label{fig:optim_demo_c}
    \end{subfigure}
    \qquad
    \begin{subfigure}{0.352\textwidth}
        \centering
        \includegraphics[width=\textwidth]{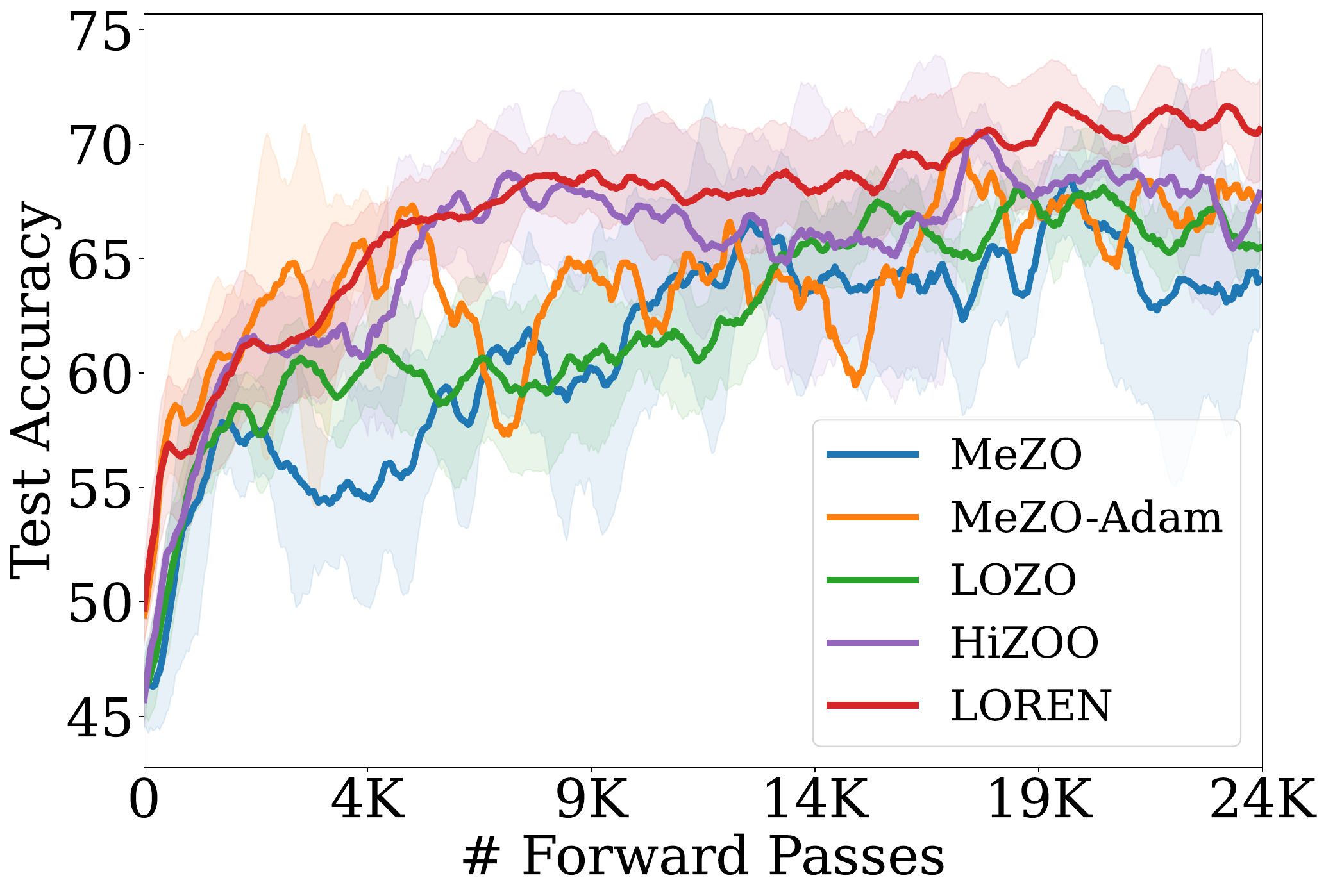}
        \subcaption{}
        \label{fig:optim_demo_d}
    \end{subfigure}
    \caption{\textbf{(a)} Mean squared errors of ZO gradient estimates, with and without RLOO, relative to the true gradient on the 1,000-dimensional Sphere, Rastrigin, and Rosenbrock functions. \textbf{(b)} Optimization trajectories of FO-SGD and ZO optimizers on the monkey saddle function, all initialized at (2.9, -0.01). Accuracy curves for \textbf{(c)} GPT-2-XL fine-tuned on QNLI and \textbf{(d)} OPT-13B fine-tuned on CB, using early stopping.}
    \label{fig:optim_demo}
\end{figure*}

In this paper, we propose \loren (\textbf{L}ow-rank
c\textbf{O}variance, \textbf{RE}INFORCE, and \textbf{N}atural
evolution strategies), a novel ZO optimization method designed to
overcome these challenges. \loren introduces three main innovations: 
\begin{enumerate}[label=(\roman*),leftmargin=*,noitemsep,topsep=0pt]
\item We reformulate the problem of gradient preconditioning in ZO
optimization as that of adaptively estimating a sampling
distribution from which random perturbations are drawn for
finite-difference gradient estimation. Existing ZO optimizers 
typically draw random perturbations from either isotropic Gaussian
or uniform distribution over unit sphere, which assume uniform
curvature in all directions and thus ignore the underlying geometry
of loss landscape. In contrast, \loren dynamically learns a
perturbation distribution that captures anisotropic local curvature
of loss function.
\item \loren leverages the framework of natural evolution strategy
(NES)~\cite{Rechenberg1973EvolutionsstrategieO,Wierstra2008NaturalES}
to accelerate the search for optimal parameters of the perturbation
distribution and models the inverse of Hessian using the Kronecker
factored rank-1 approximation to scale NES to LLM fine-tuning. The
Kronecker factorization approach allows \loren to approximate
curvature information with significantly reduced memory overhead,
making it suitable for LLM training.
\item Unlike traditional ZO optimizers that rely on 
finite-difference gradient estimators, \loren employs the REINFORCE
leave-one-out (RLOO)~\cite{Kool2019Buy4R} estimator to reduce the
variance and make effective use of multiple function evaluations.
\end{enumerate}
By combining these approaches, \loren produces search directions that are both well-conditioned and low-variance, all while preserving the memory-efficient nature of ZO methods. On Figure~\ref{fig:optim_demo_a}, we present the mean squared error (MSE) of gradient estimates for ZO-SGD, with and without RLOO, for three 1,000-dimensional test functions: Sphere, Rastrigin, and Rosenbrock. For each method, we generate 5,000 gradient estimates at a fixed point and compute their MSE relative to the true gradient. Both methods use four perturbations per gradient estimate for fairness. As shown, ZO-SGD with RLOO consistently achieves a lower MSE, demonstrating effective variance reduction. On Figure~\ref{fig:optim_demo_b}, we visualize the optimization trajectories of FO and ZO optimizers on the monkey saddle function. While ZO-SGD and ZO-Adam struggle due to noisy gradient estimates, HiZOO~\cite{zhao2025secondorder} shows moderate improvement by using a ZO-Hessian estimate.
Notably, \loren 
follows the most efficient path, 
escaping the saddle region by 
leveraging low-rank curvature and low-variance gradient estimates.

To validate the effectiveness of \loren, we evaluate its performance
in fine-tuning
both masked language models and autoregressive models on the GLUE~\cite{Wang2018GLUEAM} and SuperGLUE~\cite{Wang2019SuperGLUEAS} tasks. Figure~\ref{fig:optim_demo_c} and \ref{fig:optim_demo_d} present the test accuracy curves of state-of-the-art ZO optimizers, fine-tuning GPT-2-XL on QNLI and OPT-13B on CB, respectively. By incorporating curvature‐aware updates and variance reduction, \loren achieves the highest mean accuracy, demonstrating both superior performance and markedly more stable convergence.
The key contributions of our work can be summarized as follows: 
\begin{itemize}[leftmargin=*,noitemsep,topsep=0pt]
\item We introduce \loren, the first ZO optimizer that simultaneously adapts to curvature and applies variance reduction, enabling an efficient fine-tuning of LLMs. This combined approach delivers stable and scalable updates, even in high-dimensional and ill-conditioned settings.
\item We establish the link between ZO gradients and evolution strategies and directly estimate the preconditioned ZO gradients using the score function estimator without any additional forward passes. We construct a damped rank-1 covariance structure to preserve memory efficiency and ensure that the additional memory overhead to store curvature information remains negligible.
\item To the best of our knowledge, \loren is the first method to apply a block-diagonal approximation of the Hessian matrix in ZO optimization, capturing richer curvature information than a pure diagonal approximation.
\item We provide extensive experimental results on standard LLM benchmarks, comparing all leading ZO methods for fine-tuning. \loren consistently delivers higher test accuracy while maintaining lower memory footprint compared to other state-of-the-art preconditioned or adaptive ZO methods
, raising the bar for memory-efficient LLM fine-tuning.
\end{itemize}

\section{Related Work}
\label{sec:related}
Our work intersects ZO optimization, memory-efficient LLM fine-tuning, and curvature-aware variance reduction.

\textbf{ZO Optimization for LLMs }
ZO methods 
replace explicit gradients with function evaluations via finite-difference approximations
such as SPSA~\cite{Spall1992MultivariateSA}. This paradigm gained traction for LLM fine-tuning due to its potential for extreme memory efficiency compared to backpropagation. MeZO~\cite{malladi2023finetuning} pioneered this application, adapting ZO-SGD~\cite{Spall1992MultivariateSA} with an in-place implementation to match inference memory costs. While demonstrating feasibility and achieving strong results, MeZO can be sensitive to prompts and exhibits higher variance than FO methods.  
LOZO~\cite{chen2025enhancing} focused on aligning the ZO gradient estimator with the observed low-rank structure of LLM gradients, proposing a low-rank gradient estimator. 
\loren also uses a low-rank structure but 
applies it to the preconditioner (i.e., covariance matrix),
rather than directly estimating a low-rank gradient as in LOZO.

\textbf{Preconditioned ZO Methods }
To address slow convergence on ill-conditioned loss landscapes, curvature information has been incorporated.
HiZOO~\cite{zhao2025secondorder} 
incorporates second-order information by explicitly estimating
the diagonal entries of the Hessian
with an additional forward pass and uses it for preconditioning.
Other works have explored Hessian-aware ZO methods in different contexts rather than fine-tuning LLMs~\cite{Ye2018HessianAwareZO,Balasubramanian2018ZerothOrderNS}. 
\loren draws inspiration from the NES~\cite{Rechenberg1973EvolutionsstrategieO,Wierstra2008NaturalES}, adapting a low-rank Kronecker-factored approximation of the perturbation covariance matrix. 
This allows capturing curvature 
information
to guide the search direction efficiently without storing or estimating second-order elements directly.

\textbf{Variance Reduction in ZO Methods }
The high variance of ZO gradient estimators is a key
obstacle. 
MeZO-SVRG~\cite{gautam2024variancereduced} adapts SVRG~\cite{Johnson2013AcceleratingSG}, using periodic full-batch estimates to correct minibatch gradients, to improve 
stability and convergence over MeZO but at the cost of
increased memory, requiring storage for reference gradients and
parameters. \loren utilizes the RLOO~\cite{Kool2019Buy4R} method, a score function gradient estimator combined with a leave-one-out baseline, for gradient estimation. RLOO computes a baseline for each sample within a batch using the rewards (function values) of the other samples in the same batch. This provides effective variance reduction without needing full-batch computations like SVRG, thus preserving the minimal memory footprint of \loren.


\section{Preliminaries}
\label{sec:prelim}
\subsection{Notations}
Vectors are denoted by lowercase bold (e.g., $\vec{x}$), and matrices by uppercase bold (e.g., $\mat{X}$). 
%
We write $x_i$ to denote the $i$\th entry of vector $\vec{x}$.
$\norm{\vec{x}}$ represents the Euclidean norm unless otherwise stated. 
$\otimes$ represents the Kronecker product. For a matrix $\mat{X}\in\R^{m\times n}$, its vectorization is $\vect(\mat{X}) =
\begin{bmatrix}
\mat{X}_{*,1}^{\intercal} & \mat{X}_{*,2}^{\intercal} & \cdots & \mat{X}_{*,n}^{\intercal}
\end{bmatrix}^{\intercal}\,$, where $\mat{X}_{*,j}$ denotes the $j$\th
column of matrix $\mat{X}$. For two matrices $\mat{A}$ and $\mat{B}$, the symbol $:$ denotes their trace product, i.e., $\mat{A}:\mat{B} = \Trace(\mat{A}^\intercal \mat{B})$. 

\subsection{Zeroth-Order Gradient Estimates}
We consider the following stochastic optimization problem using the ZO oracle:
\[
  \underset{\vec{x\in \R^d}}{\argmin} f(\vec{x}):=\E_{\xi\sim\mathbb{P}}[\ell(\vec{x}; \xi)]\,,
\]
where $\vec{x}$ denotes the model parameters, $\xi$ denotes a random
data sample, and $f$ is the expected loss over the data
distribution. When FO gradients are inaccessible, a common
strategy is to estimate gradients using the finite-difference
method~\cite{Ghadimi2013StochasticFA,Nesterov2017RandomGM}. A widely
used technique is the Simultaneous Perturbation Stochastic
Approximation (SPSA), which estimates gradients using random
perturbations in all coordinates simultaneously. 
\begin{definition}[SPSA~\cite{Spall1992MultivariateSA}]
Let $f:\R^{d}\to \R$. For $\epsilon >0$, the SPSA gradient estimator is given by
\[
    \hat{\grad} f(\vec{x}) = \E_{\vec{u}\sim \mathcal{N}(\vec{0},
      \mat{I}_d)}\left[\frac{f(\vec{x} + \epsilon \vec{u}) -
        f(\vec{x}-\epsilon\vec{u})}{2\epsilon}\vec{u}\right]\,. 
\]
\end{definition}
$\hat{\grad} f(\vec{x})$ is closely related to the gradient of
Gaussian smoothed objective.
%

\begin{definition}[Generalized Gaussian smoothing] Let $f:\R^d \rightarrow \R$. The Gaussian smoothing of $f(\vec{x})$ is
  defined by
\[
  f_{\epsilon,\mat{\Sigma}}(\vec{x}) \overset{\Delta}{=} \E_{\vec{u}\sim
    \mathcal{N}(\vec{0}, \mat{\Sigma})}[f(\vec{x} + \epsilon\vec{u})]\,, 
\]
where $\epsilon > 0$ controls the smoothness.
\end{definition}
\begin{proposition} \label{prop:grad_gauss_rs}
  For $f:\R^{d}\to \R$, the gradient of Gaussian smoothed $f$ is given
  by
  \begin{equation} \label{eq:gauss_spsa} 
    \grad f_{\epsilon,\mat{\Sigma}}(\vec{x}) = \E_{\vec{u} \sim \mathcal{N}(\vec{0}, \mat{\Sigma})}
    \left[ \frac{f(\vec{x} +\epsilon \vec{u}) -
        f(\vec{x}-\epsilon\vec{u})}{2\epsilon} \mat{\Sigma}^{-1}\vec{u}
    \right]\,.
  \end{equation}  
\end{proposition}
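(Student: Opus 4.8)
The plan is to reduce the claim to the score-function (log-derivative) identity for the location family of Gaussians $\mathcal{N}(\vec{x},\epsilon^2\mat{\Sigma})$. Writing $p_{\mat{\Sigma}}(\vec{u}) = (2\pi)^{-d/2}(\Det\mat{\Sigma})^{-1/2}\exp\!\bigl(-\tfrac12\vec{u}^\intercal\mat{\Sigma}^{-1}\vec{u}\bigr)$ for the $\mathcal{N}(\vec{0},\mat{\Sigma})$ density and substituting $\vec{z} = \vec{x}+\epsilon\vec{u}$, the smoothed objective becomes $f_{\epsilon,\mat{\Sigma}}(\vec{x}) = \int_{\R^d} f(\vec{z})\, q_{\vec{x}}(\vec{z})\dd{\vec{z}}$, where $q_{\vec{x}}$ is the density of $\mathcal{N}(\vec{x},\epsilon^2\mat{\Sigma})$. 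In this form $\vec{x}$ enters only as the mean of the sampling distribution, which is precisely the setting in which the REINFORCE/score-function identity applies.

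First I would differentiate under the integral sign to obtain $\grad f_{\epsilon,\mat{\Sigma}}(\vec{x}) = \int f(\vec{z})\,\grad_{\vec{x}} q_{\vec{x}}(\vec{z})\dd{\vec{z}} = \int f(\vec{z})\,\bigl(\grad_{\vec{x}}\log q_{\vec{x}}(\vec{z})\bigr) q_{\vec{x}}(\vec{z})\dd{\vec{z}}$. A direct computation of the Gaussian log-density gives the score $\grad_{\vec{x}}\log q_{\vec{x}}(\vec{z}) = \epsilon^{-2}\mat{\Sigma}^{-1}(\vec{z}-\vec{x})$. Plugging this in and changing variables back via $\vec{z} = \vec{x}+\epsilon\vec{u}$ yields the one-sided form $\grad f_{\epsilon,\mat{\Sigma}}(\vec{x}) = \epsilon^{-1}\,\E_{\vec{u}\sim\mathcal{N}(\vec{0},\mat{\Sigma})}\!\bigl[f(\vec{x}+\epsilon\vec{u})\,\mat{\Sigma}^{-1}\vec{u}\bigr]$.

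It then remains to antisymmetrize. Since $\vec{u}$ and $-\vec{u}$ have the same $\mathcal{N}(\vec{0},\mat{\Sigma})$ law, replacing $\vec{u}$ by $-\vec{u}$ in the displayed expectation shows $\E[f(\vec{x}+\epsilon\vec{u})\mat{\Sigma}^{-1}\vec{u}] = -\E[f(\vec{x}-\epsilon\vec{u})\mat{\Sigma}^{-1}\vec{u}]$; averaging the two representations gives $\grad f_{\epsilon,\mat{\Sigma}}(\vec{x}) = \tfrac{1}{2\epsilon}\E_{\vec{u}\sim\mathcal{N}(\vec{0},\mat{\Sigma})}\!\bigl[(f(\vec{x}+\epsilon\vec{u})-f(\vec{x}-\epsilon\vec{u}))\mat{\Sigma}^{-1}\vec{u}\bigr]$, which is exactly \eqref{eq:gauss_spsa}. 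The symmetrization is not needed for the identity itself, but it matches the central-difference estimator used in practice and cancels the contribution of any additive constant in $f$.

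The only non-routine point is justifying the interchange of gradient and integral in the second step: this requires a mild regularity/growth assumption on $f$ — e.g., measurability together with at most polynomial growth — so that $f(\vec{z})\,\grad_{\vec{x}} q_{\vec{x}}(\vec{z})$ has an integrable dominating function uniformly over $\vec{x}$ in a neighborhood, which the rapidly decaying Gaussian factor in $\grad_{\vec{x}} q_{\vec{x}}$ supplies via dominated convergence. I expect this to be the main (and essentially the only) obstacle; the remaining steps are bookkeeping with the Gaussian density and the change of variables $\vec{z}=\vec{x}+\epsilon\vec{u}$.
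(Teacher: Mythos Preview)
Your argument is correct. It differs from the paper's route in one substantive way: you first change variables so that $\vec{x}$ appears only in the Gaussian density $q_{\vec{x}}$, and then differentiate the density via the log-derivative (score-function) identity. The paper instead keeps the integration in the $\vec{u}$ variable, pushes $\grad_{\vec{x}}$ onto $f(\vec{x}+\epsilon\vec{u})$, changes variables to obtain $\E_{\vec{z}\sim\mathcal{N}(\vec{x},\epsilon^{2}\mat{\Sigma})}[\grad f(\vec{z})]$, and only then invokes Stein's identity to trade $\grad f(\vec{z})$ for $\epsilon^{-2}\mat{\Sigma}^{-1}(\vec{z}-\vec{x})f(\vec{z})$; after that both proofs coincide (one-sided form, then antisymmetrize using $\vec{u}\stackrel{d}{=}-\vec{u}$). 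Your path is slightly more elementary and more general: it never differentiates $f$, so it needs only measurability and the growth condition you already flag for dominated convergence, whereas the paper's derivation implicitly assumes $f$ is differentiable so that $\grad f(\vec{z})$ makes sense before Stein's identity is applied. The paper's route, on the other hand, makes the connection to Stein's lemma explicit, which some readers may find conceptually useful.
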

Proposition~\ref{prop:grad_gauss_rs} shows that $\grad
f_{\epsilon,\mat{\Sigma}}(\vec{x}) = \hat{\grad} f(\vec{x})$ when
$\mat{\Sigma} = \mat{I}_d$.
%
As $\epsilon \to 0$, 
$\grad f_{\epsilon,\mat{\Sigma}}(\vec{x})$ approximates the true gradient:
\begin{align} \label{eq:spsa_unbiased}
    \lim_{\epsilon\to 0}\grad f_{\epsilon,\mat{\Sigma}}(\vec{x})
  &= \E_{\vec{u}\sim \mathcal{N}(\vec{0}, \mat{\Sigma})}[\langle\grad
  f(\vec{x}),\, \vec{u}\rangle \mat{\Sigma}^{-1}\vec{u}] \nonumber \\
  &= \E_{\vec{u}\sim \mathcal{N}(\vec{0},
    \mat{\Sigma})}[\mat{\Sigma}^{-1}\vec{u}\vec{u}^\intercal \grad f(\vec{x})] = \grad
  f(\vec{x})\,.  
\end{align}


\subsection{REINFORCE with Leave-One-Out Baseline}
To reduce variance, score function estimators are typically used with
a control variate $b$, referred to as a baseline that is independent
of $\vec{z}_k$:
\[
  \grad_{\vec{\theta}}J(\vec{\theta}) \approx
  \frac{1}{K}\sum_{k=1}^{K}(f(\vec{z}_k) - b)\grad_{\vec{\theta}}\log
  p(\vec{z}_k;\vec{\theta})\,,
\]
where $\vec{z}_k\sim p(\vec{z};\vec{\theta})$.
For $K\geq 2$, the RLOO estimator sets $b$
to the leave-one-out average of function values, leveraging 
multiple evaluations of $f$ to reduce variance.
\[
  \grad_{\vec{\theta}}J(\vec{\theta})\approx
  \frac{1}{K}\sum_{k=1}^{K}\left(f(\vec{z}_k) -
    \frac{\sum_{j\neq
      k}f(\vec{z}_j)}{K-1}\right)\grad_{\vec{\theta}}\log
  p(\vec{z}_k;\vec{\theta})\,. 
\]
The estimator can be equivalently expressed as $\frac{1}{K-1}\sum_{k=1}^{K}\left(f(\vec{z}_k) -
    \frac{1}{K}\sum_{j=1}^{k}f(\vec{z}_j)\right)\grad_{\vec{\theta}}\log
  p(\vec{z}_k;\vec{\theta})$.



\section{Methods}
\label{sec:method}
This section presents a detailed derivation of \loren. 

\subsection{Preconditioning via Evolution Strategies} \label{subsec:second_order_info}
Consider the following preconditioned gradient update:
\[
    \vec{x} \gets \vec{x} -\eta \Tilde{\mat{H}}^{-1}\grad f(\vec{x})\,,
\]
where $\Tilde{\mat{H}}$ is a symmetric positive definite matrix
approximating the curvature information. 
By replacing the perturbation vector $\vec{u}$ in~\eqref{eq:spsa_unbiased}
with the scaled Gaussian $\Tilde{\mat{H}}^{-1/2}\vec{u}$, we get
\begin{align} \label{eq:precondgd_cov} 
  \Tilde{\mat{H}}^{-1}\grad f(\vec{x})
  &= \E_{\vec{u}\sim \mathcal{N}(\vec{0}, \mat{I}_d)}[%
    \Tilde{\mat{H}}^{-1/2}\vec{u}\vec{u}^\intercal
    \Tilde{\mat{H}}^{-1/2}\grad f(\vec{x})] \nonumber \\
    &= \E_{\vec{u}\sim \mathcal{N}(\vec{0}, \mat{I}_d)}[%
    \langle\grad f(\vec{x}),\,
    \Tilde{\mat{H}}^{-1/2}\vec{u}\rangle
    \Tilde{\mat{H}}^{-1/2}\vec{u}] \nonumber  \\
  &= \E_{\Tilde{\vec{u}}\sim \mathcal{N}(\vec{0},
    \Tilde{\mat{H}}^{-1})}\left[\langle\grad f(\vec{x}),\,
    \Tilde{\vec{u}}\rangle \Tilde{\vec{u}}\right] \nonumber \\
    &\approx
    \E_{\Tilde{\vec{u}}\sim \mathcal{N}(\vec{0},
    \Tilde{\mat{H}}^{-1})}\left[ \frac{f(\vec{x} + \epsilon
    \Tilde{\vec{u}}) - f(\vec{x})}{\epsilon}\Tilde{\vec{u}}
    \right]\,. 
\end{align}
Equation~\eqref{eq:precondgd_cov} demonstrates that preconditioning
the gradient in ZO optimization is equivalent to drawing the
perturbation vector $\tilde{\vec{u}}$ from an anisotropic Gaussian
distribution whose covariance matrix
$\mat{\Sigma}$ equals the inverse of curvature
matrix $\Tilde{\mat{H}}$, i.e., 
$\mat{\Sigma}=\Tilde{\mat{H}}^{-1} = 
\epsilon^{2}\overline{\mat{\Sigma}}$.
We estimate the gradient in~\eqref{eq:precondgd_cov} using the
framework of evolution strategies (ES)~\cite{Rechenberg1973EvolutionsstrategieO}:
\begin{align*}
    \underset{\vec{\theta}}{\argmin}J(\vec{\theta})
  &:= \E_{\vec{z}\sim p(\vec{z}; \vec{\theta})}[f(\vec{z})] \\
  &= \E_{\vec{u}\sim\mathcal{N}(\vec{0}, \mat{I}_d)}\left[f(\vec{x}
    + \epsilon\overline{\mat{\Sigma}}^{1/2}\vec{u})\right]\,,
\end{align*}
where $\epsilon$ is a smoothing parameter and $p(\vec{z}; \vec{\theta}) = \mathcal{N}(\vec{x},
\epsilon^{2}\overline{\mat{\Sigma}})$ is the search
  distribution whose mean is the current 
solution (i.e., model parameters) $\vec{x}$ and the covariance matrix $\epsilon^{2}\overline{\mat{\Sigma}}$
models the inverse of the curvature matrix.
The gradient of $J(\vec{\theta})$ can be calculated using the score 
function estimator (also known as the REINFORCE
estimator~\cite{Williams1992SimpleSG}), given by
\begin{align}
  \grad_{\vec{\theta}}J(\vec{\theta})
  &= \E_{\vec{z}\sim p(\vec{z}; \vec{\theta})}\left[f(\vec{z})\nabla_{\vec{\theta}}
    \log{p(\vec{z}; \vec{\theta})}\right]\,. \label{eq:scorefunest} 
\end{align}

\subsection{Low-Rank Structured Covariance Matrices}
Consider a network layer with parameters $\vec{x}=\vect(\mat{X})\in
\R^{mn}$, where $\mat{X}\in \R^{m\times n}$. While the ES framework
allows capturing local curvature information, it requires
maintaining and updating the covariance matrix $\mat{\Sigma}\in \R^{mn\times
  mn}$, which can incur prohibitive memory cost, particularly for
LLMs. 
Second-order optimizers
such as Shampoo~\cite{Gupta2018ShampooPS} and
KFAC~\cite{martens2015optimizing} exploit Kronecker-factored
curvature approximations 
to efficiently estimate the curvature matrix using
significantly smaller memory than
storing the full matrix.
Recent studies~\cite{sankar2021Deeper,yang2022Does, seung2025mac} have shown that
the Hessian and Fisher Information matrix (FIM) of deep neural
networks exhibit inherent low-rank structure. 
Motivated by these, we propose to 
estimate the curvature matrix
$\tilde{\mat{H}} = \grad_{\vec{x}}^2f(\vec{x})\in \R^{mn\times mn}$ by 
\begin{equation} \label{eq:lowrank_approx}
  \tilde{\mat{H}} = \mat{I}_{m} \otimes (\rho\mat{I}_n + \vec{a}\vec{a}^\intercal) \,,
\end{equation}
where 
$\rho>0$ is a damping factor and $\vec{a}\in \R^{n}$ is a learnable
vector that parameterizes the curvature matrix.
The damped rank-1 block-diagonal approximation admits a closed-form solution for
both the inverse $\tilde{\mat{H}}^{-1}$ and the inverse square root
$\tilde{\mat{H}}^{-1/2}$, enabling efficient implementation.
As shown in~\ref{subsec:second_order_info}, we
leverage the curvature information directly by setting $\mat{\Sigma} = 
\tilde{\mat{H}}^{-1}$ and draw the perturbation $\tilde{\vec{u}} =
\mat{\Sigma}^{1/2}\vec{u}$, where $\vec{u}\sim\mathcal{N}(\vec{0}, \mat{I}_{mn})$,
%
%
\begin{align}
  &\mat{\Sigma}
  =  \mat{I}_{m} \otimes (\rho\mat{I}_n +
    \vec{a}\vec{a} ^\intercal)^{-1} = \mat{I}_{m}
    \otimes \frac{1}{\rho}\left(\mat{I}_n -
    \frac{\vec{a}\vec{a}^\intercal}{\rho +
    \|\vec{a}\|^2}\right)\,,  \label{eq:cov}\\ 
  &\mat{\Sigma}^{1/2}
  = \mat{I}_m \otimes \frac{1}{\sqrt{\rho}}\left(\mat{I}_n
    -\frac{\sqrt{\rho}+\sqrt{\rho +
    \|\vec{a}\|^2}}{\|\vec{a}\|^2\sqrt{\rho +
    \|\vec{a}\|^2}}\vec{a}\vec{a}^\intercal\right)\,, \text{ and } 
    \label{eq:sqrt_cov}\\ 
  &\mat{d \Sigma}
  = \mat{I}_{m} \otimes
    \left(-\frac{(\vec{d}\vec{a})\vec{a}^\intercal +
    \vec{a}(\vec{d}\vec{a}^\intercal)}{\rho(\rho +
    \|\vec{a}\|^2)} + \frac{2\vec{a}^\intercal(\vec{d}\vec{a})\vec{a}\vec{a}^\intercal}{\rho(\rho
    + \|\vec{a}\|^2)^2}\right)\,.  \label{eq:d_cov} 
\end{align}

\subsection{Search Distribution Parameter Updates}
Let $p(\vec{z};\vec{\theta})$ denote the search distribution, modeled as
a multivariate Gaussian 
%
$\mathcal{N}(\vec{x},\mat{\Sigma})$
in \loren, where 
$\vec{\theta} = [\vec{x}^\intercal,\vect(\mat{\Sigma})^\intercal]^\intercal$
and $\mat{\Sigma}$ is defined as in~\eqref{eq:cov}.
%
Our goal is to update the parameters $\vec{\theta}$ of search
distribution $p(\vec{z};\vec{\theta})$ such that the expected loss
$J(\vec{\theta}) = \E_{\vec{z}\sim
  p(\vec{z};\vec{\theta})}[f(\vec{z})]$ of the underlying model is
minimized. 
Let $\mathcal{L}=\log p(\vec{z};\vec{\theta})$.
The differential of $\mathcal{L}$ is given by (see Appendix~\ref{apdx:derive} for a
complete derivation):
\begin{align*}
  \dd{\mathcal{L}}
  &=
    \frac{1}{2}\mat{\Sigma}^{-1}(\mat{Z}-\mat{\Sigma})\mat{\Sigma}^{-1}
    : \dd{\mat{\Sigma}} + \mat{\Sigma}^{-1}(\vec{z}-\vec{x}) :
    \dd{\vec{x}}\,, 
\end{align*}
where $\mat{Z}=(\vec{z}-\vec{x})(\vec{z}-\vec{x})^\intercal$ and
$\dd{\mat{Z}} = -\dd{\vec{x}}(\vec{z}-\vec{x})^\intercal -
(\vec{z}-\vec{x})(\dd{\vec{x}})^\intercal$. Applying~\eqref{eq:cov}
and \eqref{eq:d_cov} gives
\begin{align}
  \dd{\mathcal{L}}
  &= (\mat{Z}-\mat{\Sigma}) : (\mat{I}_m \otimes
    -(\dd{\vec{a}})\vec{a}^\intercal))  + (\mat{Z}-\mat{\Sigma}) : \left(\mat{I}_m \otimes
    \left(-\frac{\vec{a}^\intercal\dd{\vec{a}}}{\rho}\vec{a}\vec{a}^\intercal\right)\right)
    \nonumber \\ 
  &\qquad +\frac{1}{2}(\mat{Z}-\mat{\Sigma}) : \left(\mat{I}_m
    \otimes \left(\frac{2\vec{a}^\intercal\dd{\vec{a}}}{\rho}\vec{a}\vec{a}^\intercal\right)\right) + \mat{\Sigma}^{-1}(\vec{z}-\vec{x}) : \dd{\vec{x}} \nonumber \\
  &= -\sum_{i=1}^{m}(\mat{Z}-\mat{\Sigma})_{ii}\vec{a} : \dd{\vec{a}} +
  \mat{\Sigma}^{-1}(\vec{z}-\vec{x}) : \dd{\vec{x}}\,, \label{eq:d_L} 
\end{align}
where the last equality is due to Proposition~\ref{prop:trace_ikronb}.
\begin{proposition} \label{prop:trace_ikronb}
Let $\mat{A}\in \R^{mn \times mn}$ be a symmetric matrix, and
$\mat{B}\in \R^{n\times n}$. Then we have $\Trace(\mat{A}(\mat{I}_m
\otimes \mat{B})) = \Trace((\sum_{i=1}^{m}\mat{A}_{ii})\mat{B})$,
where $\mat{A}_{ij}$ denotes the submatrix located at the $(i,j)$-th block
position when $\mat{A}$ is viewed as an $m\times m$ block matrix with
each block of size $n\times n$.
\end{proposition}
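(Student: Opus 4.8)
The plan is to exploit the block structure directly rather than go through vectorization identities. View $\mat{A}$ as an $m\times m$ array of $n\times n$ blocks $\{\mat{A}_{ij}\}_{i,j=1}^m$, and note that $\mat{I}_m\otimes\mat{B}$ is block-diagonal with every one of its $m$ diagonal blocks equal to $\mat{B}$ and all off-diagonal blocks zero. First I would carry out the block multiplication: since the $j$-th block column of $\mat{I}_m\otimes\mat{B}$ has $\mat{B}$ in block-row $j$ and zeros elsewhere, the $(i,j)$-th block of the product $\mat{A}(\mat{I}_m\otimes\mat{B})$ is exactly $\sum_{k}\mat{A}_{ik}\,(\mat{I}_m\otimes\mat{B})_{kj} = \mat{A}_{ij}\mat{B}$.

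Second, I would invoke the elementary fact that for any square matrix written in a conformal block partition, the trace equals the sum of the traces of its diagonal blocks (this is just regrouping the $mn$ diagonal entries). Applied to $\mat{A}(\mat{I}_m\otimes\mat{B})$, whose $i$-th diagonal block is $\mat{A}_{ii}\mat{B}$, this gives
\[
\Trace\bigl(\mat{A}(\mat{I}_m\otimes\mat{B})\bigr) = \sum_{i=1}^{m}\Trace(\mat{A}_{ii}\mat{B})\,.
\]
Third, linearity of the trace collapses the sum, $\sum_{i=1}^{m}\Trace(\mat{A}_{ii}\mat{B}) = \Trace\bigl((\sum_{i=1}^{m}\mat{A}_{ii})\mat{B}\bigr)$, which is the claimed identity. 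As an alternative route I could instead write everything in scalar index notation, using the index split $p=(i-1)n+k$ and the identity $(\mat{I}_m\otimes\mat{B})_{(j-1)n+\ell,\,(i-1)n+k} = \delta_{ij}\mat{B}_{\ell k}$, so that $\Trace(\mat{A}(\mat{I}_m\otimes\mat{B})) = \sum_{i,j,k,\ell}(\mat{A}_{ij})_{k\ell}\,\delta_{ij}\mat{B}_{\ell k} = \sum_{i,k,\ell}(\mat{A}_{ii})_{k\ell}\mat{B}_{\ell k}$, arriving at the same place.

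There is essentially no hard step here: the only thing requiring care is fixing a consistent block-indexing convention (which ordering of Kronecker factors, how $\vect$ interacts with the blocks) so that "the $(i,j)$-th block of $\mat{I}_m\otimes\mat{B}$ is $\delta_{ij}\mat{B}$" is stated correctly; once that is pinned down the computation is a two-line regrouping. I would also add a one-sentence remark that the symmetry hypothesis on $\mat{A}$ is not needed for this identity — it is stated that way only because in the application of~\eqref{eq:d_L} the relevant matrix $\mat{A}=\mat{Z}-\mat{\Sigma}$ happens to be symmetric — which also makes clear that $\sum_{i=1}^m\mat{A}_{ii}$ inherits symmetry in that case, consistent with its role as a coefficient multiplying $\vec{a}\vec{a}^\intercal$.
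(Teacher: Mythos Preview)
Your proposal is correct and essentially matches the paper's own argument: the paper also exploits the block-diagonal structure of $\mat{I}_m\otimes\mat{B}$ to write $\Trace(\mat{A}(\mat{I}_m\otimes\mat{B})) = \Trace((\mat{I}_m\otimes\mat{B})\mat{A}) = \sum_{i=1}^{m}\Trace(\mat{B}\mat{A}_{ii}) = \Trace\bigl((\sum_{i=1}^{m}\mat{A}_{ii})\mat{B}\bigr)$, which is the same block-trace regrouping you describe. Your added remark that the symmetry hypothesis on $\mat{A}$ is unnecessary for the identity is correct and a worthwhile observation.
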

From~\eqref{eq:d_L}, we obtain
\begin{align*}
    \grad_{\vec{x}}\log p(\vec{z};\vec{\theta}) &=
    \mat{\Sigma}^{-1}(\vec{z}-\vec{x})\,, \\
    \grad_{\vec{a}}\log p(\vec{z};\vec{\theta}) &=
    \sum_{i=1}^{m}(\mat{Z}-\mat{\Sigma})_{ii}\vec{a}\,. 
\end{align*}
Applying the reparameterization trick $\vec{z} = \vec{x} +
\mat{\Sigma}^{1/2} \vec{u}$, where $\vec{u}\sim \mathcal{N}(\vec{0},
\mat{I}_{mn})$, and~\eqref{eq:sqrt_cov} yields 
\begin{align}
  \grad_{\vec{a}} \log p(\vec{z};\vec{\theta}) &= \sum_{i=1}^{m}(\mat{\Sigma}^{1/2}(\mat{I}_{mn} -
    \vec{u}\vec{u}^\intercal)\mat{\Sigma}^{1/2})_{ii}\vec{a} \nonumber
  \\ 
  &=\sum_{i=1}^{m} \frac{\big(\mat{M}_i\vec{a} - \kappa(\vec{a}^\intercal\mat{M}_i\vec{a}) \vec{a}\big)}{\sqrt{\rho}\sqrt{\rho +
      \|\vec{a}\|^2}}\,, \label{eq:grad_a} 
\end{align}
where $\kappa = \left(\sqrt{\rho} + \sqrt{\rho +
      \|\vec{a}\|^2}\right)/ \left(\|\vec{a}\|^2 \sqrt{\rho +
      \|\vec{a}\|^2}\right)$,  $\mat{M}_i = \vec{u}_i\vec{u}_i^\intercal -
      \mat{I}_n$, and $\vec{u}_i\in \R^{n}$ denotes the $i$\th subvector of
      $\vec{u}\in \R^{mn}$ from index $(i-1)n+1$ to $in$, for $i=1,
      \ldots, m$.
The FIM $\mat{F}_{\vec{x}}$ has a simple form and is given by
\begin{align*}
    \mat{F}_{\vec{x}}
  &= \E_{\vec{z}\sim p(\vec{z};\vec{\theta})}[\grad_{\vec{x}}\log p(\vec{z};\vec{\theta})
  \grad_{\vec{x}}\log p(\vec{z};\vec{\theta})^\intercal]\\
  &= \E_{\vec{u}\sim \mathcal{N}(\vec{0}, \mat{I}_{mn})} [%
  \mat{\Sigma}^{-1/2}\vec{u}\vec{u}^\intercal \mat{\Sigma}^{-1/2}] \\ 
      &= \mat{\Sigma}^{-1}\,.
\end{align*}
Thus, 
the natural gradient w.r.t. $\vec{x}$ is given by 
\begin{equation}
  \mat{F}_{\vec{x}}^{-1}\grad_{\vec{x}}\log p(\vec{z};\vec{\theta})
  = \mat{\Sigma}\mat{\Sigma}^{-1}(\vec{z}-\vec{x}) = \mat{\Sigma}^{1/2}\vec{u}\,. \label{eq:grad_mu}
\end{equation}
%
We now derive the score function 
estimates of ZO gradient
$\hat{\grad}_{\vec{x}} f(\vec{x})$ and $\hat{\grad}_{\vec{a}}
f(\vec{x})$. 
%
%
Using
the fact that the gradient of Gaussian smoothed $f(\vec{x})$
corresponds to the SPSA estimator as given in~\eqref{eq:gauss_spsa},
we obtain 
\begin{align*}
  \hat{\grad}_{\vec{x}} f(\vec{x})
  &= \grad_{\vec{x}}\E_{\vec{u}\sim\mathcal{N}(\vec{0},
    \mat{I}_d)}\left[f(\vec{x} +
    \epsilon\overline{\mat{\Sigma}}^{1/2}\vec{u})\right]  \\
  &= \E_{\vec{u}\sim \mathcal{N}(\vec{0}, \mat{I}_{mn})}\left[f(\vec{x}
    +
    \epsilon\overline{\mat{\Sigma}}^{1/2}\vec{u})\epsilon^{-1}
    \overline{\mat{\Sigma}}^{-1/2}\vec{u}
    \right]\,. 
\end{align*}
From~\eqref{eq:grad_mu}, the natural gradient
$\mat{F}_{\vec{x}}^{-1}\hat{\grad}_{\vec{x}} f(\vec{x})$ is given by
\[
  \mat{F}_{\vec{x}}^{-1}\hat{\grad}_{\vec{x}} f(\vec{x})
  = \E_{\vec{u}\sim \mathcal{N}(\vec{0}, \mat{I}_{mn})}\left[f(\vec{x} +
    \epsilon\overline{\mat{\Sigma}}^{1/2}\vec{u})\epsilon^{-1}
    \overline{\mat{\Sigma}}^{1/2}\vec{u}
  \right]\,,
\]
where, from~\eqref{eq:sqrt_cov},
\begin{align}
  \overline{\mat{\Sigma}}^{1/2}\vec{u}
  &= \frac{1}{\epsilon}\left\{\mat{I}_m \otimes \frac{1}{\sqrt{\rho}}\left(\mat{I}_n
    -\kappa
    \vec{a}\vec{a}^\intercal\right)\right\}\vec{u} \nonumber \\
  &= \frac{1}{\epsilon}\sum_{i=1}^{m}\frac{1}{\sqrt{\rho}}\left(\mat{I}_n
    -\kappa\vec{a}\vec{a}^\intercal\right)\vec{u}_i\,. \label{eq:sqrt_cov_u} 
\end{align}
Similarly, we have
\[
  \hat{\grad}_{\vec{a}} f(\vec{x}) = \E_{\vec{u}\sim
    \mathcal{N}(\vec{0}, \mat{I}_{mn})}\left[f(\vec{x} +
    \epsilon\overline{\mat{\Sigma}}^{1/2}\vec{u})\grad_{\vec{a}} \log{p(\vec{z};
      \vec{\theta})}\right]\,, 
\]
where $\grad_{\vec{a}} \log{p(\vec{z}; \vec{\theta})}$ is given
by~\eqref{eq:grad_a}.

\begin{algorithm}[tb]
  \KwIn{Dataset $S=\{\xi_1, \ldots, \xi_n\}$, Initialization
    $\vec{x}_0\in \R^{d}$, $\vec{a}_0 \sim \mathcal{N}(0, \mat{I}_d)$, number of iterations $T$, learning rates
    $\{\eta, \nu\}$, smoothing $\epsilon$, damping $\rho$, number of forward passes $K$}
  \DontPrintSemicolon
  \For{$t = 0$ \KwTo $T-1$}{
    Sample mini‐batch $\mathcal{B}_t$\;
    \For{$k = 1$ \KwTo $K$}{
      Sample $\vec{u}_k \sim\mathcal{N}\left(\vec{0}, \mat{I}_d\right)$ \label{alg:sample_u}\;
      $f^{k} \gets f(\vec{x}_t +
      \epsilon\overline{\mat{\Sigma}}^{1/2}\vec{u}_{k}; \mathcal{B}_t)$ \label{alg:compute_f}\;
    }
    $\vec{g}(\vec{x}_t) \gets \frac{1}{\epsilon(K-1)}\sum_{k=1}^{K}\left(f^k - \frac{1}{K}\sum_{j=1}^{K}f^j\right)\overline{\mat{\Sigma}}^{1/2}\vec{u}_k$ \label{alg:rloo_grad_x}\;
    $\vec{g}(\vec{a}_t) \gets \frac{1}{K-1}\sum_{k=1}^{K}\left(f^k - \frac{1}{K}\sum_{j=1}^{K}f^j\right)\grad_{\vec{a}}\log p(\vec{z}_k;\vec{\theta})$, \quad where $\vec{z}_k = \vec{x}_k + \epsilon\overline{\mat{\Sigma}}^{1/2}\vec{u}_k$ \label{alg:rloo_grad_a}\;
    $\vec{x}_{t+1} \gets \vec{x}_t - \eta \vec{g}(\vec{x}_t)$ \label{alg:update_x}\;
    $\vec{a}_{t+1} \gets \vec{a}_t - \nu \vec{g}(\vec{a}_t)$ \label{alg:update_a}\;
  }
  \Return $\vec{x}_{T}$\;
  \caption{LOREN}
  \label{alg:loren}
\end{algorithm}

\subsection{Algorithm}
The key steps of \loren are summarized in the pseudocode presented in
Algorithm~\ref{alg:loren}.
The algorithm begins by sampling $K$ perturbation vectors $\vec{u}_k \sim
\mathcal{N}(\vec{0}, \mat{I}_d)$, where the dimensionality $d=mn$ for layers with
matrix-valued parameters and $d=m$ for layers with vector-valued
parameters. The loss function $f$ is then evaluated 
using the scaled perturbations
$\epsilon\overline{\mat{\Sigma}}^{1/2}\vec{u}_k$ as shown in~\eqref{eq:sqrt_cov_u}
(Lines~\ref{alg:sample_u}--\ref{alg:compute_f}).
In Lines~\ref{alg:rloo_grad_x}--\ref{alg:rloo_grad_a}, \loren applies
the RLOO estimator to compute the variance reduced gradients w.r.t. both the mean 
$\vec{x}$ and covariance parameters $\vec{a}$.
Using the gradient estimates,
Lines~\ref{alg:update_x}--\ref{alg:update_a} 
simultaneously update
$\vec{x}$ and 
$\vec{a}$.

\begin{table}[tb]
  \caption{Additional memory requirement compared to MeZO.}
  \label{tab:complexity}
  \centering
  \footnotesize
  \begin{tabular}{lccccc}
    \toprule
    Method & MeZO-Adam & MeZO-SVRG & LOZO & HiZOO & \loren \\
    \midrule
    Cost & $\mathcal{O}(mn)$ & $\mathcal{O}(mn)$ & $\mathcal{O}(nr)$ & $\mathcal{O}(mn)$ & $\mathcal{O}(n)$  \\
    \bottomrule
  \end{tabular}
\end{table}

\textbf{Memory Complexity }
Table~\ref{tab:complexity} shows the additional memory overhead of 
MeZO variants relative to MeZO. 
MeZO-Adam, MeZO-SVRG, and HiZOO each require 
$\mathcal{O}(mn)$ extra space, while LOZO incurs 
$\mathcal{O}(nr)$ overhead to store low-rank gradient components, where $r$ is the rank. In contrast, \loren reduces the memory cost to 
$\mathcal{O}(n)$ by maintaining only low-rank covariance components. 
This memory efficiency enables the use of heavyball momentum~\cite{polyak1964}, from which \loren benefits through acceleration. We refer to this momentum variant simply as \loren in Section~\ref{sec:experiment}.

\subsection{Convergence Analysis}
The following theorem shows that \loren can converge to a stationary
point at a rate of $\mathcal{O}(1/\sqrt{T})$, where $T$ is the number of iterations. 
\begin{theorem} \label{thm:convergence}
Assuming the $L$-smoothness of the objective function $f$ and bounded
variance of gradient estimates (see Assumption~\ref{assm:smoothness}
and~\ref{assm:bounded_variance} in Appendix~\ref{sec:supp_convergence}
for details), the sequence of parameter vectors $\{\vec{x}_t\}$
generated by Algorithm~\ref{alg:loren} with $\eta =
\frac{1}{8\sqrt{T}L(\max_t \Trace(\mat{\Sigma}_t) + 2\rho^{-1})}$
satisfies
\begin{align*}
   \min_{t=1:T}\E\left[\norm{\grad{f}(\vec{x}_t;\xi_t)}^{2}\right] &\leq
  \frac{32L(\max_t \Trace(\mat{\Sigma}_t) +
    2\rho^{-1})\left(f(\vec{x}_1; \xi_1) -
      f(\vec{x}_{*};\xi_{*})\right)}{\sqrt{T}\alpha_{\min}} \\
      &\qquad + \frac{\sigma^{2}}{\sqrt{T}\alpha_{\min}} + \mathcal{O}(\epsilon^{2})\,, 
\end{align*}
where $\alpha_{\min} = (\rho + \max_t\norm{\vec{a}_t}^{2})^{-1}$ is the
smallest eigenvalue of $\mat{\Sigma}_t$.
\end{theorem}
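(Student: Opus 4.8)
The plan is to treat the $\vec{x}$-update in Algorithm~\ref{alg:loren}, $\vec{x}_{t+1}=\vec{x}_t-\eta\,\vec{g}(\vec{x}_t)$, as preconditioned stochastic gradient descent in which $\vec{g}(\vec{x}_t)$ is a nearly unbiased, variance-reduced estimator of $\mat{\Sigma}_t\grad f(\vec{x}_t;\xi_t)$, where $\mat{\Sigma}_t$ is the damped rank-1 block-diagonal preconditioner of~\eqref{eq:cov} evaluated at $\vec{a}_t$. First I would write down the $L$-smoothness descent inequality along the step,
\[
  f(\vec{x}_{t+1};\xi_t)\le f(\vec{x}_t;\xi_t)-\eta\,\ip{\grad f(\vec{x}_t;\xi_t),\,\vec{g}(\vec{x}_t)}+\tfrac{L\eta^2}{2}\,\norm{\vec{g}(\vec{x}_t)}^2,
\]
and take the conditional expectation $\E_t[\cdot]$ over the $K$ i.i.d.\ perturbations $\vec{u}_k$ given $(\vec{x}_t,\vec{a}_t)$. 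The whole argument then reduces to (i) a lower bound on $\ip{\grad f(\vec{x}_t;\xi_t),\,\E_t\vec{g}(\vec{x}_t)}$ and (ii) an upper bound on $\E_t\norm{\vec{g}(\vec{x}_t)}^2$; with these in place the conclusion follows from a routine telescoping over $t$.

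For (i), I would establish near-unbiasedness: in the leave-one-out form of RLOO each summand's baseline $\tfrac{1}{K-1}\sum_{j\neq k}f^j$ is independent of $\vec{u}_k$ and $\E_t[\mat{\Sigma}_t^{1/2}\vec{u}_k]=\vec{0}$, so the baseline vanishes in expectation; Taylor-expanding $f(\vec{x}_t+\epsilon\mat{\Sigma}_t^{1/2}\vec{u}_k;\xi_t)$ and using that the third moment of a Gaussian is zero (as in Proposition~\ref{prop:grad_gauss_rs} and~\eqref{eq:spsa_unbiased}) yields $\E_t\vec{g}(\vec{x}_t)=\mat{\Sigma}_t\grad f(\vec{x}_t;\xi_t)+\vec{b}_t$ with $\norm{\vec{b}_t}=\mathcal{O}(\epsilon^2)$. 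Since $\mat{\Sigma}_t$ is measurable at step $t$ it factors out of $\E_t$, and from the closed form~\eqref{eq:cov} its eigenvalues are exactly $\rho^{-1}$ and $(\rho+\norm{\vec{a}_t}^2)^{-1}$, so $\mat{\Sigma}_t\succeq\alpha_{\min}\mat{I}$ and $\norm{\mat{\Sigma}_t}\le\rho^{-1}$. Hence $\ip{\grad f(\vec{x}_t;\xi_t),\,\E_t\vec{g}(\vec{x}_t)}\ge\alpha_{\min}\norm{\grad f(\vec{x}_t;\xi_t)}^2-\mathcal{O}(\epsilon^2)\norm{\grad f(\vec{x}_t;\xi_t)}$, and Young's inequality absorbs the cross term into $\tfrac{1}{2}\alpha_{\min}\norm{\grad f(\vec{x}_t;\xi_t)}^2+\mathcal{O}(\epsilon^4)$.

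For (ii), the role of the RLOO baseline is to cancel the $f(\vec{x}_t)/\epsilon$ term that would otherwise make the one-point estimator's second moment diverge as $\epsilon\to0$; what remains is a fourth-moment expression in the anisotropic Gaussian $\vec{v}=\mat{\Sigma}_t^{1/2}\vec{u}\sim\mathcal{N}(\vec{0},\mat{\Sigma}_t)$. Evaluating $\E[\norm{\vec{v}}^2\,\vec{v}\vec{v}^\intercal]=\Trace(\mat{\Sigma}_t)\,\mat{\Sigma}_t+2\mat{\Sigma}_t^{2}$ and using $\norm{\mat{\Sigma}_t}\le\rho^{-1}$ produces precisely the factor $\Trace(\mat{\Sigma}_t)+2\rho^{-1}$ multiplying $\norm{\grad f(\vec{x}_t;\xi_t)}^2$, while the averaging over the $K$ i.i.d.\ draws contributes the $1/(K-1)$ variance-reduction factor and, through the bounded-variance Assumption~\ref{assm:bounded_variance}, an additive $\sigma^2$ term; collecting, $\E_t\norm{\vec{g}(\vec{x}_t)}^2\lesssim(\Trace(\mat{\Sigma}_t)+2\rho^{-1})(\norm{\grad f(\vec{x}_t;\xi_t)}^2+\sigma^2)+\mathcal{O}(\epsilon^2)$.

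Substituting (i)–(ii) into the descent inequality and taking full expectations, I would pick $\eta$ small enough that $\tfrac{L\eta^2}{2}$ times the coefficient of $\norm{\grad f(\vec{x}_t;\xi_t)}^2$ is dominated by $\tfrac{1}{4}\eta\alpha_{\min}$; requiring this at the scale $\eta\propto1/\sqrt{T}$ fixes $\eta=\big(8\sqrt{T}\,L(\max_t\Trace(\mat{\Sigma}_t)+2\rho^{-1})\big)^{-1}$ and leaves, for each $t$, an inequality of the form $c\,\eta\,\alpha_{\min}\,\E\norm{\grad f(\vec{x}_t;\xi_t)}^2\le\E f(\vec{x}_t;\xi_t)-\E f(\vec{x}_{t+1};\xi_{t+1})+\tfrac{L\eta^2}{2}(\max_t\Trace(\mat{\Sigma}_t)+2\rho^{-1})\sigma^2+\eta\,\mathcal{O}(\epsilon^2)$. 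Summing over $t=1,\dots,T$, telescoping the function values into $f(\vec{x}_1;\xi_1)-f(\vec{x}_*;\xi_*)$, dividing by $c\,\eta\,\alpha_{\min}T$, bounding $\min_t\le\frac{1}{T}\sum_t$, and inserting the chosen $\eta$ (so that $\tfrac{1}{\eta T}\propto L(\cdot)/\sqrt{T}$ and $\eta\propto1/\sqrt{T}$) produces the stated $\mathcal{O}(1/\sqrt{T})$ bound, with the leftover $\mathcal{O}(\epsilon^2)$ being the irreducible finite-difference bias. I expect step (ii) to be the main obstacle: one must track exactly which cross-sample terms the leave-one-out baseline cancels, keep the ZO contribution to the variance under control, and evaluate the fourth-moment expressions in $\mat{\Sigma}_t^{1/2}\vec{u}$ that bring in $\Trace(\mat{\Sigma}_t)$. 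A secondary wrinkle is that $\mat{\Sigma}_t$ depends on $\vec{a}_t$, which is updated jointly with $\vec{x}_t$, so the argument does not try to bound $\norm{\vec{a}_t}$ but instead lets the uniform quantities $\max_t\Trace(\mat{\Sigma}_t)$ and $\alpha_{\min}=(\rho+\max_t\norm{\vec{a}_t}^2)^{-1}$ appear directly in the final constant.
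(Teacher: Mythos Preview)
Your plan matches the paper's proof: $L$-smoothness descent lemma, near-unbiasedness of the RLOO estimator via the leave-one-out baseline's independence from $\vec{u}_k$, the Gaussian fourth-moment identity $\E[\norm{\vec{v}}^{2}\vec{v}\vec{v}^{\intercal}]=\Trace(\mat{\Sigma}_t)\mat{\Sigma}_t+2\mat{\Sigma}_t^{2}$ (the paper isolates this as a separate lemma), and a telescoping sum with $\min_t\le\frac{1}{T}\sum_t$.

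The one technical difference worth flagging is where $\alpha_{\min}$ enters. The paper carries the entire descent argument in the weighted norm $\norm{\cdot}_{\mat{\Sigma}_t}^{2}=\ip{\cdot,\mat{\Sigma}_t\cdot}$: the inner-product term is exactly $\norm{\grad f(\vec{x}_t;\xi_t)}_{\mat{\Sigma}_t}^{2}$, and your fourth-moment computation in (ii) actually gives $(\Trace(\mat{\Sigma}_t)+2\rho^{-1})\,\norm{\grad f(\vec{x}_t;\xi_t)}_{\mat{\Sigma}_t}^{2}$, \emph{not} the Euclidean $\norm{\grad f}^{2}$ as you wrote. Because both terms live in the same norm, they combine cleanly and the step-size condition $4\eta L(\Trace(\mat{\Sigma}_t)+2\rho^{-1})\le\tfrac12$ is satisfied by the stated $\eta$ for all $T\ge1$, independent of $\alpha_{\min}$; the conversion $\norm{\grad f}^{2}\le\alpha_{\min}^{-1}\norm{\grad f}_{\mat{\Sigma}_t}^{2}$ is applied only once, at the very end. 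Your early conversion via $\mat{\Sigma}_t\succeq\alpha_{\min}\mat{I}$ in (i) works too, but then (ii) needs an extra $\rho^{-1}$ to pass from $\norm{\cdot}_{\mat{\Sigma}_t}$ to Euclidean, and your step-size constraint $\tfrac{L\eta}{2}(\Trace+2\rho^{-1})\le\tfrac{\alpha_{\min}}{4}$ only holds for $T$ large enough --- so you would recover the bound with slightly worse constants and an implicit lower bound on $T$.
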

Given $\mat{\Sigma}_t = \mat{I}_m \otimes \frac{1}{\rho}\left(\mat{I}_n - \frac{\vec{a}_t \vec{a}_t^\top}{\rho + \|\vec{a}_t\|^2}\right)$, we have
\begin{align*}
\mathrm{tr}(\mat{\Sigma}_t) &= \mathrm{tr}(\mat{I}_m) \cdot \mathrm{tr}\!\left(
  \frac{1}{\rho} \left( \mat{I}_n - \frac{\vec{a}_t \vec{a}_t^\top}{\rho + \|\vec{a}_t\|^2}\right)\right) \\
&=\frac{m}{\rho}\left(n - \frac{\|\vec{a}_t\|^2}{\rho + \|\vec{a}_t\|^2}\right)\,.
\end{align*}
Since $0 \le \frac{\|\vec{a}_t\|^2}{\rho + \|\vec{a}_t\|^2} < 1$,
$\frac{m(n-1)}{\rho}
\;\le\;
\mathrm{tr}(\mat{\Sigma}_t)$ holds. Substituting into $\eta$ gives
\[
\eta
=
\frac{1}{8 \sqrt{T} \, L \, \bigl(\max_t \mathrm{tr}(\mat{\Sigma}_t) + 2 \rho^{-1}\bigr)}
\;\le\;
\frac{\rho}{8 \sqrt{T} \, L \, \bigl(m(n-1) + 2\bigr)}.
\]

\begin{table}[tb]
    \centering
    \caption{Experimental results on DistilBERT and RoBERTa. Reported metrics include best accuracy (\%) with standard deviation over 5 runs and the averaged accuracy across 4 benchmark tasks from GLUE.}
    \footnotesize
    \begin{tabular}{lccccc}
        \toprule
        & \multicolumn{5}{c}{\textbf{DistilBERT (66M) --- FP32}} \\
        \cmidrule(lr){2-6}
        Task & MNLI & QNLI & SST-2 & CoLA & Avg  \\ 
        \midrule
        MeZO & 39.9\scriptsize{$\pm$0.2} & 48.5\scriptsize{$\pm$0.6} & 62.1\scriptsize{$\pm$0.2} & 67.0\scriptsize{$\pm$0.4} & 54.4  \\
        MeZO-Adam & 41.2\scriptsize{$\pm$1.5} & 71.1\scriptsize{$\pm$2.2} & 78.4\scriptsize{$\pm$1.7} & 67.8\scriptsize{$\pm$1.9} & 64.6  \\
        MeZO-SVRG & 42.0\scriptsize{$\pm$1.5} & 64.0\scriptsize{$\pm$2.4} & 73.6\scriptsize{$\pm$2.7} & 66.6\scriptsize{$\pm$0.9} & 61.6 \\
        LOZO & 40.2\scriptsize{$\pm$0.3} & 53.0\scriptsize{$\pm$1.4} & 61.0\scriptsize{$\pm$1.5} & 67.0\scriptsize{$\pm$0.6} & 55.3  \\
        HiZOO & 40.0\scriptsize{$\pm$0.2} & 64.5\scriptsize{$\pm$8.0} & 78.7\scriptsize{$\pm$0.9} & 67.0\scriptsize{$\pm$1.0} & 62.6  \\
        \midrule
        \loren & 39.8\scriptsize{$\pm$0.0} & 73.0\scriptsize{$\pm$2.0} & 81.7\scriptsize{$\pm$1.0} & 67.2\scriptsize{$\pm$0.8} & \textbf{65.4} \\
        \midrule
        & \multicolumn{5}{c}{\textbf{RoBERTa‐large (355M) --- FP32}} \\
        \cmidrule(lr){2-6}
        Task & MNLI & QNLI & SST-2 & CoLA & Avg \\ 
        \midrule
        MeZO & 39.8\scriptsize{$\pm$0.3} & 71.6\scriptsize{$\pm$1.5} & 54.8\scriptsize{$\pm$0.6} & 67.2\scriptsize{$\pm$0.3} & 58.4  \\
        MeZO‐Adam & 51.6\scriptsize{$\pm$1.5} & 80.1\scriptsize{$\pm$1.8} & 84.8\scriptsize{$\pm$3.7} & 77.9\scriptsize{$\pm$1.5} & \textbf{73.6}  \\
        MeZO‐SVRG & 39.8\scriptsize{$\pm$0.3} & 59.1\scriptsize{$\pm$6.7} & 55.2\scriptsize{$\pm$0.5} & 67.3\scriptsize{$\pm$0.5} & 54.6 \\
        LOZO & 41.3\scriptsize{$\pm$2.2} & 70.8\scriptsize{$\pm$1.7} & 54.6\scriptsize{$\pm$0.5} & 69.5\scriptsize{$\pm$1.8} & 59.1 \\
        HiZOO & 43.1\scriptsize{$\pm$0.9} & 70.2\scriptsize{$\pm$2.5} & 73.2\scriptsize{$\pm$5.5} & 70.2\scriptsize{$\pm$1.4} & 64.2\\
        \midrule
        \loren & 44.3\scriptsize{$\pm$1.4} & 76.3\scriptsize{$\pm$1.5} & 86.1\scriptsize{$\pm$3.1} & 73.8\scriptsize{$\pm$0.4} & 70.1 \\
        \bottomrule
    \end{tabular}
    \label{tab:masked_models}
\end{table}


\section{Experiments}
\label{sec:experiment}
We evaluate \loren on masked and autoregressive language models using both GLUE and SuperGLUE benchmarks, comparing it with MeZO, MeZO-Adam, MeZO-SVRG, LOZO, and HiZOO. 
For optimal performance, \loren employs six forward evaluations per iteration with RLOO variance reduction.
The same evaluation budget is applied to all baselines to ensure a fair and consistent comparison. While other ZO optimizers typically use only two or three forward passes per step, this default setting leads to degraded performance compared to using six passes. Results under their default settings are provided in Appendix~\ref{apdx:2fwd_pass}.
We conduct full-parameter fine-tuning of LLMs without using prompts, following the more challenging setting from \cite{gautam2024variancereduced}, in contrast to the prompt fine-tuning setup used in \cite{malladi2023finetuning, chen2025enhancing, zhao2025secondorder}, in order to better highlight the performance gap between ZO optimizers.
Early stopping~\cite{Prechelt1996EarlySW} was employed to prevent over-iteration, as ZO optimizers tend to yield diminishing performance gains once convergence is reached. All experiments were conducted on a single NVIDIA H100 or A100 GPU, with details provided in Appendix~\ref{apdx:details}. An ablation study exploring the impact of \loren’s key hyperparameters is presented in Appendix~\ref{apdx:ablation}.

\begin{table}[tb]
    \centering
    \caption{Experimental results on GPT-2, OPT, and LLaMA-3. Reported metrics include best accuracy (\%) with standard deviation over 5 runs and the average accuracy across 4 benchmark tasks from GLUE and SuperGLUE.}
    \footnotesize
    \begin{tabular}{lccccc}
        \toprule
        & \multicolumn{5}{c}{\textbf{GPT-2-XL (1.5B) --- FP32}} \\
        \cmidrule(lr){2-6}
        Task & MNLI & QNLI & SST-2 & CoLA & Avg \\ 
        \midrule
        MeZO & 39.1\scriptsize{$\pm$1.1} & 58.8\scriptsize{$\pm$0.2} & 73.8\scriptsize{$\pm$0.8} & 65.4\scriptsize{$\pm$0.2} & 59.3 \\
        MeZO-Adam & 50.9\scriptsize{$\pm$1.3} & 72.3\scriptsize{$\pm$4.3} & 91.2\scriptsize{$\pm$0.6} & 71.6\scriptsize{$\pm$0.8} & 71.5 \\
        MeZO-SVRG & 49.4\scriptsize{$\pm$1.0} & 65.2\scriptsize{$\pm$1.0} & 84.0\scriptsize{$\pm$1.6} & 65.8\scriptsize{$\pm$0.2} & 66.1 \\
        LOZO & 42.1\scriptsize{$\pm$0.7} & 60.0\scriptsize{$\pm$1.3} & 79.4\scriptsize{$\pm$1.0} & 65.6\scriptsize{$\pm$0.3} & 61.8 \\ 
        HiZOO & 48.6\scriptsize{$\pm$0.2} & 66.3\scriptsize{$\pm$3.6} & 89.6\scriptsize{$\pm$0.2} & 71.5\scriptsize{$\pm$0.8} & 69.0 \\ 
        \midrule
        \loren & 51.2\scriptsize{$\pm$0.3} & 74.6\scriptsize{$\pm$1.2} & 89.8\scriptsize{$\pm$0.8} & 72.0\scriptsize{$\pm$0.7} & \textbf{71.9} \\
        \midrule
        & \multicolumn{5}{c}{\textbf{OPT-2.7B --- FP32}} \\
        \cmidrule(lr){2-6}
        Task & MNLI & QNLI & SST-2 & CoLA & Avg \\ 
        \midrule
        MeZO & 50.2\scriptsize{$\pm$3.3} & 75.2\scriptsize{$\pm$1.0} & 88.9\scriptsize{$\pm$0.2} & 68.4\scriptsize{$\pm$2.3} & 70.7 \\
        MeZO-Adam & 45.7\scriptsize{$\pm$1.2} & 72.0\scriptsize{$\pm$3.1} & 86.7\scriptsize{$\pm$2.7} & 68.2\scriptsize{$\pm$1.0} & 68.2 \\
        MeZO-SVRG & 41.2\scriptsize{$\pm$0.2} & 59.4\scriptsize{$\pm$0.8} & 63.5\scriptsize{$\pm$1.8} & 66.2\scriptsize{$\pm$0.2} & 57.6  \\
        LOZO & 42.7\scriptsize{$\pm$1.0} & 59.8\scriptsize{$\pm$0.8} & 66.4\scriptsize{$\pm$3.1} & 66.9\scriptsize{$\pm$0.2} & 59.0  \\
        HiZOO & 50.4\scriptsize{$\pm$1.2} & 75.4\scriptsize{$\pm$0.8} & 88.1\scriptsize{$\pm$0.6} & 66.0\scriptsize{$\pm$0.0} & 70.0  \\
        \midrule
        \loren & 55.1\scriptsize{$\pm$0.7} & 73.8\scriptsize{$\pm$1.6} & 89.5\scriptsize{$\pm$0.8} & 68.0\scriptsize{$\pm$0.0}  & \textbf{71.6} \\
        \midrule
        & \multicolumn{5}{c}{\textbf{LLaMA-3-8B --- BF16}} \\
        \cmidrule(lr){2-6}
        Task & RTE & BoolQ & WiC & CB & Avg \\ 
        \midrule
        MeZO & 59.6\scriptsize{$\pm$1.5} & 64.2\scriptsize{$\pm$0.8} & 59.2\scriptsize{$\pm$1.9} & 71.4\scriptsize{$\pm$1.5} & 63.6  \\
        MeZO-Adam & 58.6\scriptsize{$\pm$1.0} & 63.7\scriptsize{$\pm$0.0} & 57.7\scriptsize{$\pm$1.0} & 68.5\scriptsize{$\pm$0.8} & 62.1 \\
        MeZO-SVRG & 58.2\scriptsize{$\pm$1.4} & 63.8\scriptsize{$\pm$0.2} & 58.0\scriptsize{$\pm$1.1} & 65.7\scriptsize{$\pm$2.9} & 61.4  \\
        LOZO & 57.0\scriptsize{$\pm$1.2} & 64.8\scriptsize{$\pm$0.6} & 57.2\scriptsize{$\pm$2.1} & 71.4\scriptsize{$\pm$4.4} & 62.6  \\
        HiZOO & 57.8\scriptsize{$\pm$0.6} & 63.7\scriptsize{$\pm$0.0} & 59.2\scriptsize{$\pm$1.5} & 66.7\scriptsize{$\pm$3.7} & 61.8 \\
        \midrule
        \loren & 58.6\scriptsize{$\pm$1.0} & 64.8\scriptsize{$\pm$1.2} & 59.2\scriptsize{$\pm$1.2} & 72.2\scriptsize{$\pm$2.7} & \textbf{63.7}\\
        \midrule
        & \multicolumn{5}{c}{\textbf{OPT-13B --- BF16}} \\
        \cmidrule(lr){2-6}
        Task & RTE & BoolQ & WiC & CB & Avg  \\ 
        \midrule
        MeZO & 59.2\scriptsize{$\pm$0.5} & 64.3\scriptsize{$\pm$1.0} & 57.2\scriptsize{$\pm$0.4} & 73.1\scriptsize{$\pm$1.6} & 63.4 \\
        MeZO-Adam & 60.8\scriptsize{$\pm$1.4} & 63.4\scriptsize{$\pm$1.3} & 55.7\scriptsize{$\pm$1.2} & 70.2\scriptsize{$\pm$1.7} & 62.5 \\
        MeZO-SVRG & 59.1\scriptsize{$\pm$0.9} & 65.2\scriptsize{$\pm$0.7} & 57.3\scriptsize{$\pm$0.9} & 69.3\scriptsize{$\pm$2.8} & 62.7  \\
        LOZO & 58.5\scriptsize{$\pm$0.7} & 63.2\scriptsize{$\pm$0.2} & 57.6\scriptsize{$\pm$1.8} & 68.5\scriptsize{$\pm$3.0} & 62.0  \\
        HiZOO & 59.8\scriptsize{$\pm$1.3} & 63.4\scriptsize{$\pm$0.4} & 57.4\scriptsize{$\pm$1.0} & 72.6\scriptsize{$\pm$1.7} & 63.3  \\
        \midrule
        \loren & 60.0\scriptsize{$\pm$0.7} & 63.2\scriptsize{$\pm$0.6} & 57.2\scriptsize{$\pm$0.8} & 73.7\scriptsize{$\pm$1.3} &\textbf{63.5}  \\
        \bottomrule
    \end{tabular}
    \label{tab:ar_models}
\end{table}

\begin{figure*}[tb]
    \centering
    \begin{subfigure}{0.245\textwidth}
        \centering
        \includegraphics[width=\textwidth]{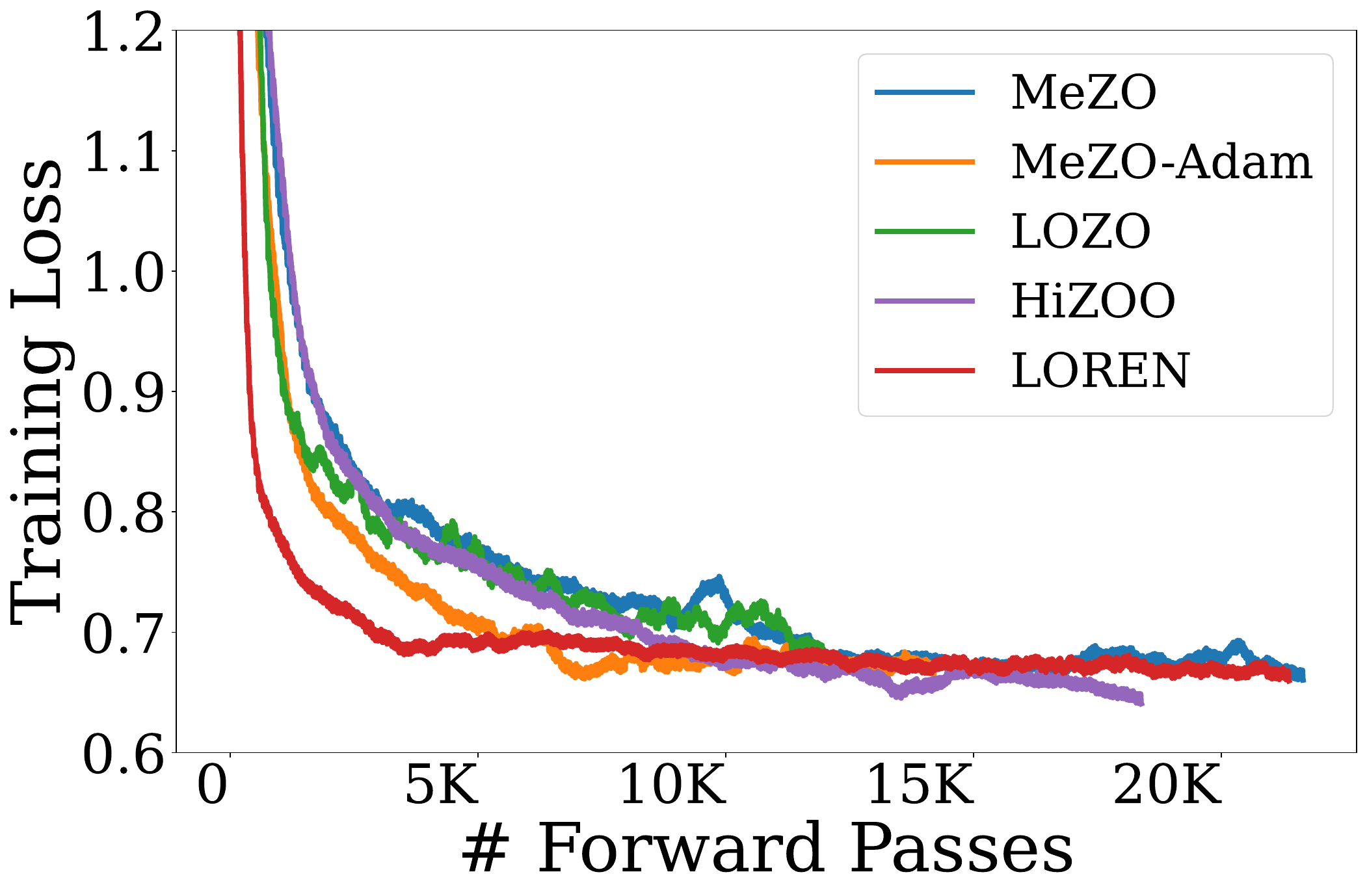}
        \subcaption{RTE}
        \label{fig:train_loss_rte}
    \end{subfigure}%
    \begin{subfigure}{0.24\textwidth}
        \centering
        \includegraphics[width=\textwidth]{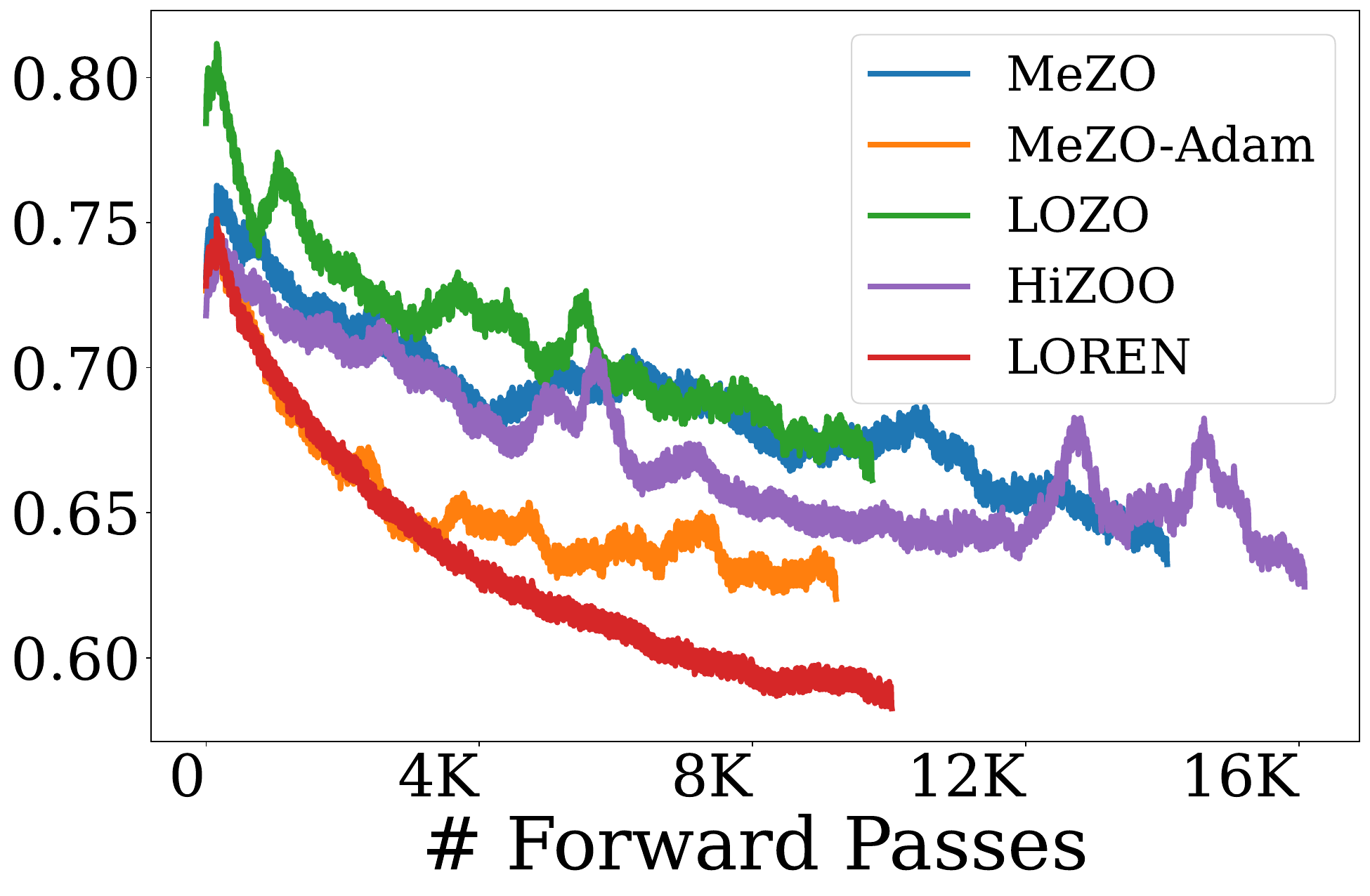}
        \subcaption{BoolQ}
        \label{fig:train_loss_boolq}
    \end{subfigure}%
    \begin{subfigure}{0.245\textwidth}
        \centering
        \includegraphics[width=\textwidth]{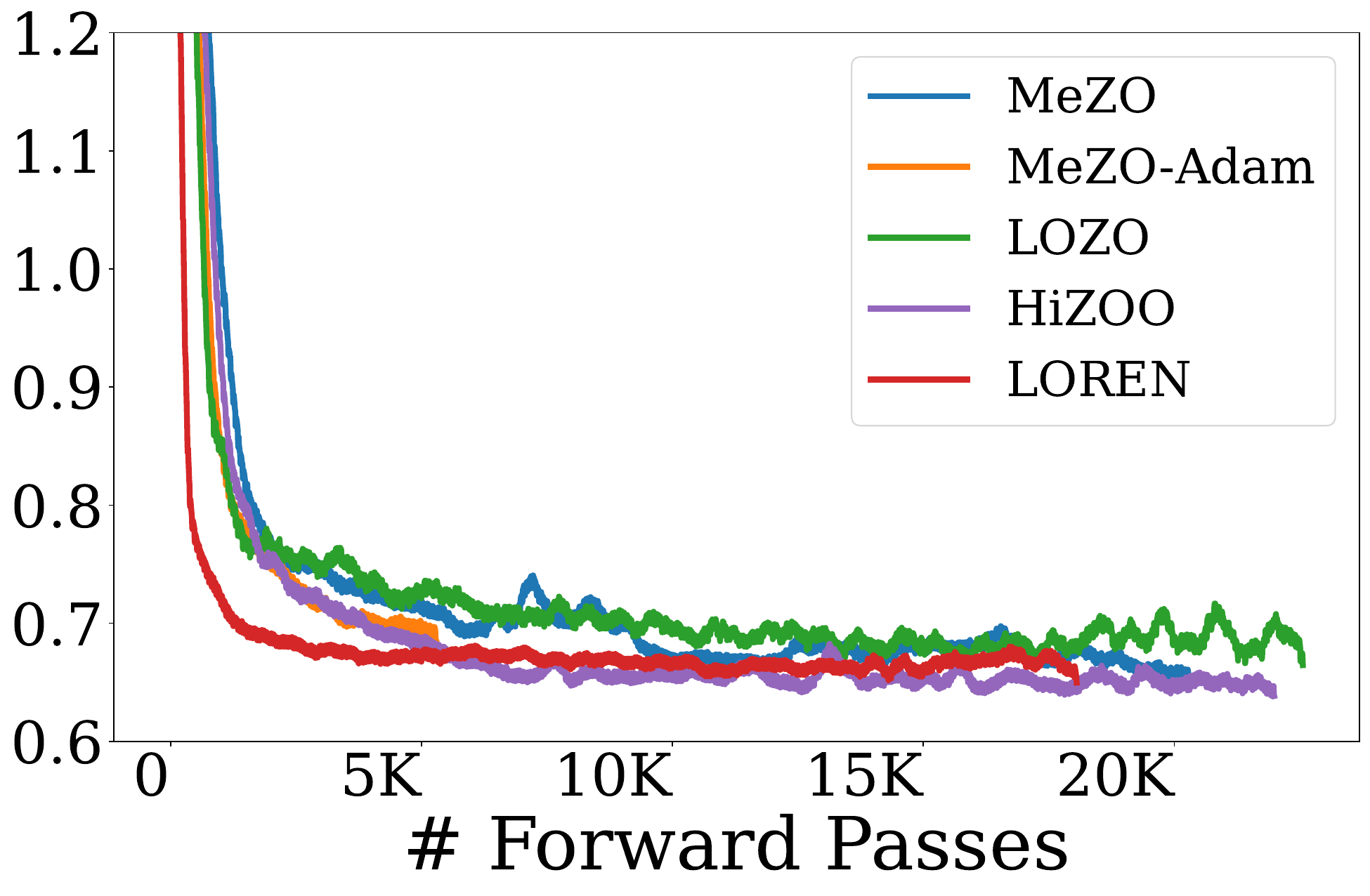}
        \subcaption{WiC}
        \label{fig:train_loss_wic}
    \end{subfigure}%
    \begin{subfigure}{0.24\textwidth}
        \centering
        \includegraphics[width=\textwidth]{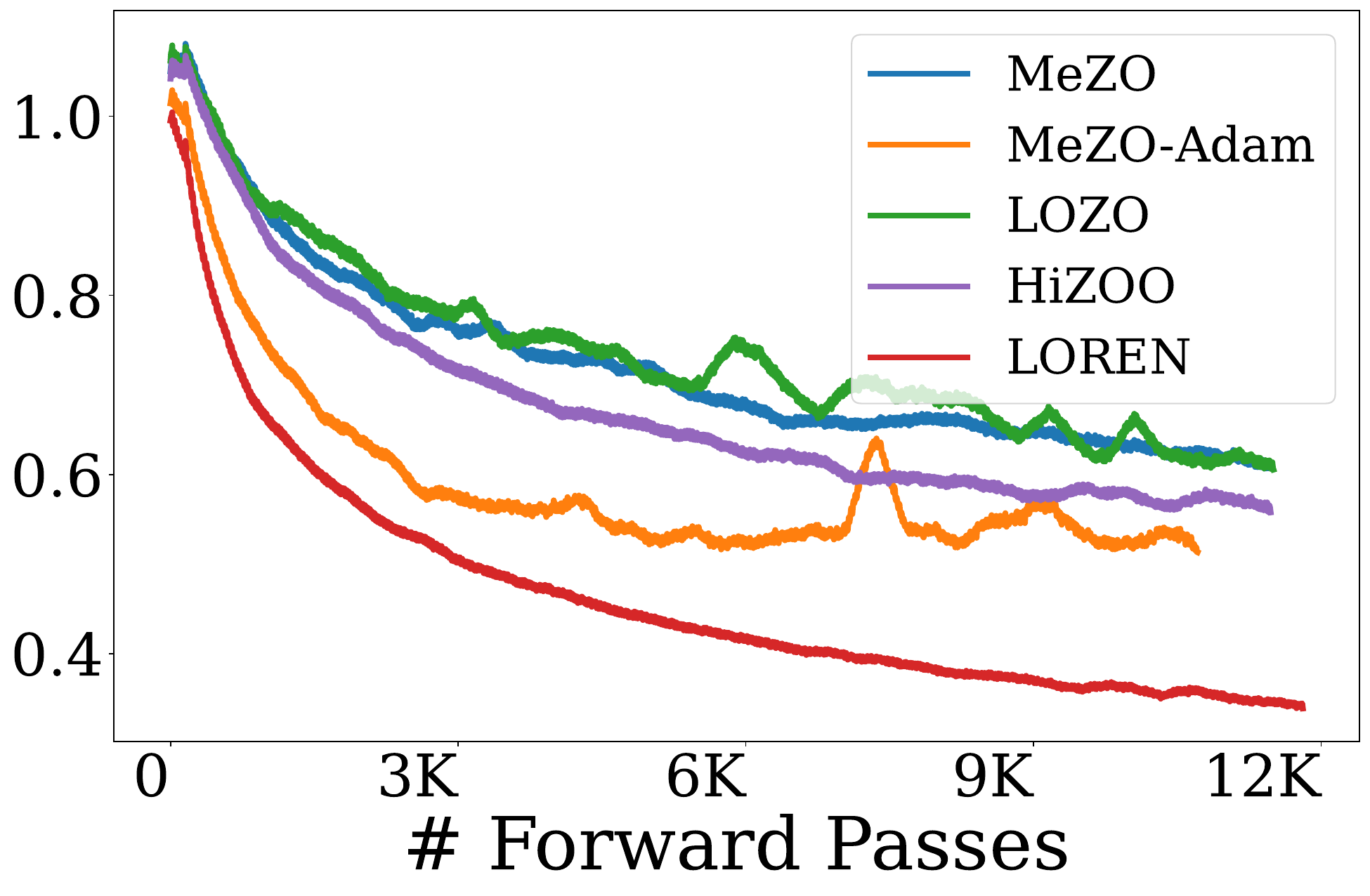}
        \subcaption{CB}
        \label{fig:train_loss_cb}
    \end{subfigure}
    \caption{Training loss curves for different ZO optimizers when fine-tuning OPT-13B on SuperGLUE tasks.}
    \label{fig:train_loss_superglue}
\end{figure*}

\begin{table*}[tb]
    \centering
    \caption{Peak GPU memory consumption (GB) and relative usage (MeZO = 1.00). LLaMA-3-8B and OPT-13B were trained using half-precision (BF16).}
    \scriptsize
    \begin{tabular}{lcccccc}
        \toprule
        Models & DistilBERT & RoBERTa-large & GPT-2-XL & OPT-2.7B & LLaMA-3-8B & OPT-13B \\
        \midrule
        MeZO & 0.85 & 2.14 & 16.9 & 17.4 & 18.4 & 32.9 \\
        MeZO-Adam & 1.36 (1.60$\times$) & 4.83 (2.26$\times$) & 28.8 (1.70$\times$) & 37.3 (2.14$\times$) & 46.2 (2.51$\times$) & 76.0 (2.31$\times$) \\
        MeZO-SVRG & 1.18 (1.39$\times$) & 4.25 (1.99$\times$) & 32.3 (1.91$\times$) & 32.6 (1.87$\times$) & 44.3 (2.41$\times$) & 74.7 (2.27$\times$) \\
        LOZO & 0.76 (0.89$\times$) & 2.07 (0.97$\times$) & 16.8 (0.99$\times$) & 16.8 (0.97$\times$) & 17.4 (0.95$\times$) & 32.8 (1.00$\times$) \\
        HiZOO & 1.43 (1.68$\times$) & 4.49 (2.10$\times$) & 24.3 (1.44$\times$) & 29.3 (1.69$\times$) & 36.5 (1.98$\times$) & 59.6 (1.81$\times$) \\
        \midrule
        \loren & 1.15 (1.35$\times$) & 3.73 (1.74$\times$) & 23.1 (1.37$\times$) & 27.6 (1.59$\times$) & 33.6 (1.83$\times$) & 57.5 (1.75$\times$) \\
        \bottomrule
    \end{tabular}
    \label{tab:memory_usage}
\end{table*}

\subsection{Masked Language Models}
\textbf{\loren consistently improves performance across masked language models. } As shown in Table~\ref{tab:masked_models}, \loren achieves the highest average accuracy on DistilBERT, outperforming both MeZO and LOZO by more than 10 percentage points (\%p). On RoBERTa-large, \loren ranks second overall, closely trailing MeZO-Adam, while surpassing MeZO, MeZO-SVRG, and LOZO by over 10\%p, and HiZOO by 6\%p.

\subsection{Autoregressive Language Models} \label{sec:ar_lm}
\textbf{\loren delivers the best overall accuracy across all model architectures. }
As shown in Table~\ref{tab:ar_models}, on GPT-2-XL \loren slightly improves over MeZO-Adam and outperforms MeZO-SVRG and HiZOO by more than 5\%p. On OPT-2.7B, it leads MeZO-Adam by over 3\%p and MeZO-SVRG and LOZO by over 10\%p. Across SuperGLUE tasks with LLaMA-3 and OPT-13B, \loren again achieves the highest average accuracy, consistently surpassing every baseline. We exclude MeZO-SVRG from our SuperGLUE fine-tuning because its performance gains are limited and its implementation runs considerably slower on large models.

\textbf{\loren achieves the fastest loss minimization. } The training loss curves in 
Figure~\ref{fig:train_loss_superglue} 
confirms \loren’s markedly faster convergence.
For OPT-13B fine-tuned on SuperGLUE, \loren demonstrates the most rapid loss reduction among ZO methods, ultimately reaching substantially lower final losses on BoolQ and CB compared to all baselines. Similar results highlighting \loren's fastest loss minimization on GLUE tasks are presented in Appendix~\ref{apdx:glue_train_loss}.

\subsection{Memory and Training Efficiency}
\textbf{\loren still maintains the affordable memory usage.}  
As shown in Table~\ref{tab:memory_usage}, even after integrating variance reduction, curvature-aware updates, and momentum, \loren still requires less memory than MeZO-Adam, MeZO-SVRG, and HiZOO. Although it does not match the minimal usage of MeZO or LOZO, \loren achieves a favorable trade-off. Across six architectures, \loren's peak memory consumption ranges from 1.35$\times$ to 1.83$\times$ that of the MeZO baseline, compared to 1.68$\times$ -- 2.10$\times$ for HiZOO, 1.60$\times$ -- 2.51$\times$ for MeZO-Adam, and 1.39$\times$ -- 2.41$\times$ for MeZO-SVRG. These results indicate that the additional memory overhead introduced by \loren remains relatively modest in comparison to other MeZO variants. 

\textbf{\loren exhibits the highest query efficiency.}  
Table~\ref{tab:train_efficiency} reports the number of forward passes and wall-clock time needed to reach a target accuracy when fine-tuning GPT-2-XL on SST-2 and LLaMA-3-8B on CB. We set the targets based on the lowest accuracy achieved by any ZO optimizer in each setting. In both benchmarks, \loren requires the fewest forward-pass queries to reach the targets. In terms of wall-clock time, \loren consistently ranks as the second fastest method, closely matching MeZO-Adam on GPT-2-XL fine-tuned for SST-2, and trailing only LOZO on LLaMA-3-8B fine-tuned for CB, while maintaining a clear runtime advantage over all other ZO methods.

\begin{table}[tb]
    \centering
    \caption{Number of forward passes and wall-clock time (hours) required to reach 70\% accuracy on SST-2 with GPT-2-XL and 65\% accuracy on CB with LLaMA-3-8B. All values are reported as mean $\pm$ standard deviation over 5 independent runs.}
    \footnotesize
    \begin{tabular}{lcccc}
    \toprule
    Models & \multicolumn{2}{c}{GPT-2-XL} & \multicolumn{2}{c}{LLaMA-3-8B}\\
    \cmidrule(lr){2-3} 
    \cmidrule(lr){4-5}
    & \# Queries & Time & \# Queries & Time \\ 
    \midrule
    MeZO & 16,752$\pm$816 & 3.13$\pm$0.2 & 5,736$\pm$1,476 & 0.80$\pm$0.2\\
    MeZO-Adam & 1,632$\pm$144 & \textbf{0.32$\pm$0.0} & 6,894$\pm$894 & 0.96$\pm$0.1\\
    MeZO-SVRG & 4,248$\pm$648 & 1.51$\pm$0.2 & 3,216$\pm$1,491 & 0.53$\pm$0.2 \\
    LOZO & 10,686$\pm$1,368 & 0.66$\pm$0.1 & 3,198$\pm$888 & \textbf{0.15$\pm$0.0}  \\
    HiZOO & 2,232$\pm$216 & 0.98$\pm$0.1 & 4,356$\pm$1,932 & 0.76$\pm$0.3\\
    \midrule
    \loren & \textbf{1,320$\pm$72} & 0.33$\pm$0.0 & \textbf{1,512$\pm$486} & 0.47$\pm$0.1\\
    \bottomrule
    \end{tabular}
    \label{tab:train_efficiency}
\end{table}

\section{Conclusions}
\label{sec:conclusion}
In this work, we proposed \loren, the first ZO preconditioned method
specifically designed to address the heterogeneous curvature problem
in LLM fine-tuning by learning an anisotropic random perturbation
distribution.
By combining the Kronecker-factored low-rank approximation of
curvature information with variance-reduced RLOO gradient
estimates, \loren effectively adapts to the geometry of complex loss
landscapes in a memory-efficient manner.
Empirical evaluations on LLM fine-tuning tasks
demonstrate that \loren consistently achieves higher accuracy and faster convergence across
various models
while maintaining a lower memory consumption compared to
state-of-the-art ZO methods.


\bibliographystyle{plain}
\bibliography{main}
\appendix
\onecolumn
\section*{Appendix}
\addcontentsline{toc}{section}{Appendix}
\setcounter{secnumdepth}{1}

\section{Derivation of Gradients} \label{apdx:derive}
Recall that for two matrices $\mat{A}$ and $\mat{B}$, the symbol $:$ denotes their trace product such that $\mat{A}:\mat{B = \Trace(\mat{A}^\intercal \mat{B})}$. We have
\begin{align*}
    \mat{A}:\mat{B} &= \mat{B}:\mat{A} = \mat{A}^\intercal : \mat{B}^\intercal \\
    \mat{A}:\mat{B}\mat{C} &= \mat{B}^\intercal\mat{A}:\mat{C} = \mat{A}\mat{C}^\intercal : \mat{B}
\end{align*}

\subsection{Gradients in Multivariate Gaussian Distributions}
Let $p(\vec{z};\vec{\theta})$ denote the multivariate Gaussian distribution $\mathcal{N}(\vec{x},\mat{\Sigma})$, where $\vec{\theta} = [\vec{x}^\intercal,\vect(\mat{\Sigma})^\intercal]^\intercal$, whose probability density function is given by
\begin{align*}
    p(\vec{z};\vec{\theta}) &= (2\pi)^{-d/2}\det(\mat{\Sigma})^{-1/2}\exp\left(-\frac{1}{2}(\vec{z}-\vec{x})^\intercal\mat{\Sigma}^{-1}(\vec{z}-\vec{x})\right)\\
    &=\Det(\mat{L})^{-1}\kappa \left(\mat{L}^{-1}(\vec{z}-\vec{x})\right)\,,
\end{align*}
where $\kappa(\vec{z}) = (2\pi)^{-d/2}\exp\left(-\frac{1}{2}\|\vec{z}\|^2\right)$ and $\mat{\Sigma} = \mat{L}\mat{L}^\intercal$.
\begin{align*}
    \mathcal{L}:=\log{p(\vec{z};\vec{\theta})} &=-\frac{d}{2}\log{(2\pi)} -\frac{1}{2}\log{\Det(\mat{\Sigma})} - \frac{1}{2}(\vec{z}-\vec{x})^\intercal \mat{\Sigma}^{-1}(\vec{z}-\vec{x}) \\
    &= \text{constant} - \frac{1}{2}\log{\Det(\mat{\Sigma})} - \frac{1}{2}\Trace\left(\mat{\Sigma}^{-1}(\vec{z}-\vec{x})(\vec{z}-\vec{x})^\intercal\right)\,.
\end{align*}
To compute the gradient, we can differentiate $\mathcal{L}$.
\begin{align}
    \dd\mathcal{L} &= -\frac{1}{2}\Trace\left(\mat{\Sigma}^{-1}\dd\mat{\Sigma}\right) - \frac{1}{2}\Trace\left(\dd\mat{\Sigma}^{-1}\mat{Z}\right) - \frac{1}{2}\Trace\left(\mat{\Sigma}^{-1}\dd\mat{Z}\right) \nonumber \\
    &= -\frac{1}{2}\Trace\left(\mat{\Sigma}^{-1}\dd\mat{\Sigma}\right) + \frac{1}{2}\Trace\left(\mat{\Sigma}^{-1}\dd\mat{\Sigma}\mat{\Sigma}^{-1}\mat{Z}\right) + \frac{1}{2} \Trace\left(\mat{\Sigma}^{-1}\left((\vec{z}-\vec{x})(\dd\vec{x})^\intercal + (\dd\vec{x})(\vec{z}-\vec{x})^\intercal \right)\right) \nonumber\\
    &= \frac{1}{2}\Trace\left(\dd\mat{\Sigma}\left(\mat{\Sigma}^{-1}\mat{Z}\mat{\Sigma}^{-1} - \mat{\Sigma}^{-1}\right)\right) + \Trace\left((\dd\vec{x})^\intercal \mat{\Sigma}^{-1}(\vec{z}-\vec{x})\right) \nonumber\\
    &= \frac{1}{2}\Trace\left(\dd\mat{\Sigma}\mat{\Sigma}^{-1}\left(\mat{Z} - \mat{\Sigma}\right)\mat{\Sigma}^{-1}\right) + (\dd\vec{x})^\intercal \mat{\Sigma}^{-1}(\vec{z}-\vec{x})\,, \label{eq:differntial}
\end{align}
where $\mat{Z} = (\vec{z}- \vec{x})(\vec{z}- \vec{x})^\intercal$ and $\dd\mat{Z} = -\dd\vec{x}(\vec{z}- \vec{x})^\intercal - (\vec{z}- \vec{x})(\dd\vec{x})^\intercal$.
Then we have
\[
    \grad_{\vec{\theta}}\log{p(\vec{z};\vec{\theta})} = \begin{bmatrix}
         \left(\mat{\Sigma}^{-1}(\vec{z}-\vec{x})\right)^\intercal & \frac{1}{2}\vect(\mat{\Sigma}^{-1}\mat{Z}\mat{\Sigma}^{-1} - \mat{\Sigma}^{-1})^\intercal
     \end{bmatrix}^\intercal\,.
\]

\subsection{Gradients for a Low-Rank Covariance Structure}
As in \eqref{eq:lowrank_approx}, define $\mat{\Sigma}^{-1} = \tilde{\mat{H}}$ as
\begin{align*}
  \mat{\Sigma}^{-1} &= \mat{I}_{m} \otimes (\rho\mat{I}_n + \vec{a}\vec{a}^\intercal) \\
  &=\left(\mat{I}_m \otimes \sqrt{\rho}\left(\mat{I}_n + \frac{\vec{a}\vec{a}^\intercal}{\sqrt{\rho}(\rho + \|\vec{a}\|)}\right)\right) \left(\mat{I}_m \otimes \sqrt{\rho}\left(\mat{I}_n + \frac{\vec{a}\vec{a}^\intercal}{\sqrt{\rho}(\rho + \|\vec{a}\|)}\right)\right)^\intercal \,,
\end{align*}
where $\vec{a}\in \R^{n}$. Its inverse is obtained by applying the Sherman-Morrison formula:
\begin{align*}
    \mat{\Sigma} &= \mat{I}_m \otimes (\rho\mat{I}_n + \vec{a}\vec{a}^\intercal)^{-1} \\
    &= \mat{I}_m \otimes \frac{1}{\rho}\left(\mat{I}_n - \frac{\vec{a\vec{a}^\intercal}}{\rho + \|\vec{a}\|}\right) = \mat{I}_m \otimes \mat{\Gamma} \\
    &= \left\{\mat{I}_m \otimes \frac{1}{\sqrt{\rho}}\left(\mat{I}_n - \frac{\sqrt{\rho} + \sqrt{\rho + \|\vec{a}\|^2}}{\|\vec{a}\|^2\sqrt{\rho + \|\vec{a}\|^2}}\vec{a}\vec{a}^\intercal\right)\right\}\left\{\mat{I}_m \otimes \frac{1}{\sqrt{\rho}}\left(\mat{I}_n - \frac{\sqrt{\rho} + \sqrt{\rho + \|\vec{a}\|^2}}{\|\vec{a}\|^2\sqrt{\rho + \|\vec{a}\|^2}}\vec{a}\vec{a}^\intercal\right)\right\}^\intercal \,, \\
    \dd\mat{\Sigma}&= \mat{I}_m \otimes \dd\mat{\Gamma} \\
    &= \mat{I}_m \otimes \left(-\frac{(\dd\vec{a})\vec{a}^\intercal + \vec{a}(\dd\vec{a}^\intercal)}{\rho(\rho + \|\vec{a}\|^2)}+\frac{2\vec{a}^\intercal (\dd\vec{a}) \vec{a}\vec{a}^\intercal}{\rho(\rho + \|\vec{a}\|^2)^2}\right)\,.
\end{align*}
Using the above, we derive the differential of $\mathcal{L}$ given in \eqref{eq:differntial}.
\begin{align*}
    \dd\mathcal{L} &= \frac{1}{2}\Trace\left(\dd\mat{\Sigma}\mat{\Sigma}^{-1}\left(\mat{Z} - \mat{\Sigma}\right)\mat{\Sigma}^{-1}\right) + (\dd\vec{x})^\intercal \mat{\Sigma}^{-1}(\vec{z}-\vec{x}) \\
    &= \frac{1}{2}\mat{\Sigma}^{-1}\left(\mat{Z} - \mat{\Sigma}\right)\mat{\Sigma}^{-1} : \dd\mat{\Sigma} +  \mat{\Sigma}^{-1}(\vec{z}-\vec{x}) : \dd\vec{x} \\
    &= \frac{1}{2} \mat{\Sigma}^{-1}\left(\mat{Z} - \mat{\Sigma}\right)\mat{\Sigma}^{-1} : \left\{\mat{I}_m \otimes \left(-\frac{(\dd\vec{a})\vec{a}^\intercal + \vec{a}(\dd\vec{a}^\intercal)}{\rho(\rho + \|\vec{a}\|^2)}+\frac{2\vec{a}^\intercal (\dd\vec{a}) \vec{a}\vec{a}^\intercal}{\rho(\rho + \|\vec{a}\|^2)^2}\right)\right\} +  \mat{\Sigma}^{-1}(\vec{z}-\vec{x}) : \dd\vec{x} \\
    &= \mat{\Sigma}^{-1}\left(\mat{Z} - \mat{\Sigma}\right)\mat{\Sigma}^{-1} : \left\{\mat{I}_m \otimes \left(-\frac{(\dd\vec{a})\vec{a}^\intercal}{\rho(\rho + \|\vec{a}\|^2)}\right)\right\} + \mat{\Sigma}^{-1}\left(\mat{Z} - \mat{\Sigma}\right)\mat{\Sigma}^{-1}: \left\{\mat{I}_m \otimes \frac{\vec{a}^\intercal (\dd\vec{a}) \vec{a}\vec{a}^\intercal}{\rho(\rho + \|\vec{a}\|^2)^2}\right\} \\
    &\qquad +  \mat{\Sigma}^{-1}(\vec{z}-\vec{x}) : \dd\vec{x} \\
    &= \left(\mat{Z} - \mat{\Sigma}\right) :\left(\mat{I}_m \otimes (\rho\mat{I}_m +\vec{a}\vec{a}^\intercal)\left(-\frac{(\dd\vec{a})\vec{a}^\intercal}{\rho(\rho + \|\vec{a}\|^2)}\right)(\rho\mat{I}_m +\vec{a}\vec{a}^\intercal)\right) \\
    &\qquad + \left(\mat{Z} - \mat{\Sigma}\right) : \left(\mat{I}_m \otimes (\rho\mat{I}_m +\vec{a}\vec{a}^\intercal)\left(\mat{I}_m \otimes \frac{\vec{a}^\intercal (\dd\vec{a}) \vec{a}\vec{a}^\intercal}{\rho(\rho + \|\vec{a}\|^2)^2}\right)(\rho\mat{I}_m +\vec{a}\vec{a}^\intercal)\right) \\
     &\qquad +  \mat{\Sigma}^{-1}(\vec{z}-\vec{x}) : \dd\vec{x} \\
     &= \left(\mat{Z} - \mat{\Sigma}\right) : \left(\mat{I}_m \otimes \frac{-1}{\rho(\rho + \|\vec{a}\|^2)}\left(\rho^2(\dd\vec{a})\vec{a}^\intercal + \rho\vec{a}^\intercal(\dd\vec{a})\vec{a}\vec{a}^\intercal + \rho \|\vec{a}\|^2(\dd\vec{a})\vec{a}^\intercal + \|\vec{a}\|^2\vec{a}^\intercal (\dd\vec{a})\vec{a}\vec{a}^\intercal\right)\right) \\
     &\qquad + \left(\mat{Z} - \mat{\Sigma}\right) : \left(\mat{I}_m \otimes \frac{\vec{a}^\intercal (\dd\vec{a})(\rho + \|\vec{a}\|^2)^2 }{\rho(\rho + \|\vec{a}\|^2)^2}\vec{a}\vec{a}^\intercal\right) \\
     &\qquad +  \mat{\Sigma}^{-1}(\vec{z}-\vec{x}) : \dd\vec{x} \\
     &=\left(\mat{Z} - \mat{\Sigma}\right) : \left(\mat{I}_m \otimes \frac{-1}{\rho(\rho + \|\vec{a}\|^2)}\left(\rho(\rho + \|\vec{a}\|^2)(\dd\vec{a})\vec{a}^\intercal + (\rho + \|\vec{a}\|^2)\vec{a}^\intercal (\dd\vec{a})\vec{a}\vec{a}^\intercal\right)\right) \\
     &\qquad + \left(\mat{Z} - \mat{\Sigma}\right) : \left(\mat{I}_m \otimes \frac{\vec{a}^\intercal (\dd\vec{a})(\rho + \|\vec{a}\|^2)^2 }{\rho(\rho + \|\vec{a}\|^2)^2}\vec{a}\vec{a}^\intercal\right) \\
     &\qquad +  \mat{\Sigma}^{-1}(\vec{z}-\vec{x}) : \dd\vec{x} \\
     &=\left(\mat{Z} - \mat{\Sigma}\right) : \left(\mat{I}_m \otimes \left(-(\dd\vec{a})\vec{a}^\intercal - \frac{\vec{a}^\intercal(\dd\vec{a})}{\rho}\vec{a}\vec{a}^\intercal\right)\right) + \left(\mat{Z} - \mat{\Sigma}\right) : \left(\mat{I}_m \otimes \frac{\vec{a}^\intercal (\dd\vec{a})}{\rho}\vec{a}\vec{a}^\intercal\right)\\
      &\qquad +  \mat{\Sigma}^{-1}(\vec{z}-\vec{x}) : \dd\vec{x} \\
      &= \left(\mat{Z} - \mat{\Sigma}\right) : \left(\mat{I}_m \otimes \left(-(\dd\vec{a})\vec{a}^\intercal \right)\right) + \mat{\Sigma}^{-1}(\vec{z}-\vec{x}) : \dd\vec{x} \,.\\
\end{align*}
Let $\mat{A}\in \R^{mn \times mn}$ be a symmetric matrix and $\mat{B}\in \R^{n\times n}$. It can be easily shown that
\begin{align*}
    \Trace(\mat{A}(\mat{I}_m \otimes \mat{B})) = \Trace((\mat{I}_m \otimes \mat{B})\mat{A}) = \sum_{i=1}^{m}\Trace(\mat{B}\mat{A}_{ii}) = \sum_{i=1}^{m}\Trace(\mat{A}_{ii}\mat{B}) = \Trace\left(\left(\sum_{i=1}^{m}\mat{A}_{ii}\right)\mat{B}\right)\,,
\end{align*}
where $\mat{A}_{ij}$ denotes the submatrix located at the $(i,j)$-th block
position when $\mat{A}$ is viewed as an $m\times m$ block matrix with
each block of size $n\times n$. Using the above, we have
\[
    \dd\mathcal{L} = -\sum_{i=1}^{m}\left(\mat{Z} - \mat{\Sigma}\right)_{ii}\vec{a} : (\dd\vec{a}) + \mat{\Sigma}^{-1}(\vec{z}-\vec{x}) : \dd\vec{x}\,.
\]

\section{An Alternative Approach for Covariance Modeling}
In this section, we explore an alternative approach for constructing the covariance of the search distribution. Rather than employing the Kronecker-factorized block diagonal structure used in \loren, we define a direct low-rank covariance structure given by
\[
\tilde{\mat{H}} = \mat{\Sigma}^{-1} = \rho\mat{I}_d + \vec{a}\vec{a}^\intercal\,,
\]
where $d=mn$ and $\vec{a}\in \R^d$. Although this formulation is computationally less efficient than the design of \loren, we present it to provide a broader context and to emphasize the efficiency and scalability of \loren’s covariance modeling.

If the precision matrix is modeled as a rank-1 plus the identity matrix, it can be expressed as
\begin{align*}
    \mat{\Sigma}^{-1} &= \rho\mat{I}_d + \vec{a}\vec{a}^\intercal = \sqrt{\rho}\left(\mat{I}_d + \frac{\vec{a}\vec{a}^\intercal}{\rho + \sqrt{\rho\|a\|^2 + \rho^2}}\right)\left(\mat{I}_d + \frac{\vec{a}\vec{a}^\intercal}{\rho + \sqrt{\rho\|a\|^2 + \rho^2}}\right)^\intercal\,,  \\
    \mat{\Sigma}^{-1}\vec{a} &= (\rho + \|\vec{a}\|^2)\vec{a}\,,  \\
    \mat{\Sigma}^{-1/2}\vec{a} &= \sqrt{\rho}\left(\vec{a} + \frac{\|\vec{a}\|^2}{\sqrt{\rho}(\sqrt{\rho}+ \sqrt{\rho + \|\vec{a}\|^2})}\vec{a}\right) = \frac{\sqrt{\rho}(\sqrt{\rho}+ \sqrt{\rho + \|\vec{a}\|^2}) + \|\vec{a}\|^2}{\sqrt{\rho} + \sqrt{\rho + \|\vec{a}\|^2}}\vec{a} \\
    &= \frac{\sqrt{\rho + \|\vec{a}\|^2}(\sqrt{\rho} + \sqrt{\rho + \|\vec{a}\|^2})}{\sqrt{\rho} + \sqrt{\rho + \|\vec{a}\|^2}}\vec{a} \\
    &= \left(\sqrt{\rho + \|\vec{a}\|^2}\right)\vec{a}\,. 
\end{align*}
By the Sherman-Morrison formula, we have
\begin{align*}
    \mat{\Sigma} &= \frac{1}{\rho}\left(\mat{I}_d - \frac{\vec{a}\vec{a}^\intercal}{\rho + \|\vec{a}\|^2}\right) \\
    &= \frac{1}{\sqrt{\rho}}\left(\mat{I}_d - \frac{\sqrt{\rho} + \sqrt{\rho + \|\vec{a}\|^2}}{\|\vec{a}\|^2\sqrt{\rho + \|\vec{a}\|^2}}\vec{a}\vec{a}^\intercal\right)\frac{1}{\sqrt{\rho}}\left(\mat{I}_d - \frac{\sqrt{\rho} + \sqrt{\rho + \|\vec{a}\|^2}}{\|\vec{a}\|^2\sqrt{\rho + \|\vec{a}\|^2}}\vec{a}\vec{a}^\intercal\right)^\intercal\,, \\
    \mat{\Sigma}\vec{a} &= \frac{1}{\rho + \|\vec{a}\|^2}\vec{a}\,, \\
    \mat{\Sigma}^{1/2}\vec{a} &= \frac{1}{\sqrt{\rho}}\left(\vec{a} - \frac{\sqrt{\rho} + \sqrt{\rho + \|\vec{a}\|^2}}{\sqrt{\rho + \|\vec{a}\|^2}}\vec{a}\right) = \frac{\sqrt{\rho + \|\vec{a}\|^2} - \sqrt{\rho} - \sqrt{\rho + \|\vec{a}\|^2}}{\sqrt{\rho}\sqrt{\rho + \|\vec{a}\|^2}} \\
    &= - \frac{1}{\sqrt{\rho + \|\vec{a}\|^2}}\vec{a}\,, \\
    \dd\mat{\Sigma} &= -\frac{(\dd\vec{a})\vec{a}^\intercal + \vec{a}(\dd\vec{a}^\intercal)}{\rho(\rho + \|\vec{a}\|^2)}+\frac{2\vec{a}^\intercal (\dd\vec{a}) \vec{a}\vec{a}^\intercal}{\rho(\rho + \|\vec{a}\|^2)^2}\,.
\end{align*}
To derive the gradient descent update equations, we begin by computing the derivatives, starting from Equation \eqref{eq:differntial}.
\begin{align*}
    \dd\mathcal{L} &= \frac{1}{2}\Trace\left(\dd\mat{\Sigma}\mat{\Sigma}^{-1}\left(\mat{Z} - \mat{\Sigma}\right)\mat{\Sigma}^{-1}\right) + (\dd\vec{x})^\intercal \mat{\Sigma}^{-1}(\vec{z}-\vec{x}) \\
    &= \frac{1}{2}\mat{\Sigma}^{-1}\left(\mat{Z} - \mat{\Sigma}\right)\mat{\Sigma}^{-1} : \dd\mat{\Sigma} +  \mat{\Sigma}^{-1}(\vec{z}-\vec{x}) : \dd\vec{x} \\
    &= \frac{1}{2} \mat{\Sigma}^{-1}\left(\mat{Z} - \mat{\Sigma}\right)\mat{\Sigma}^{-1} : \left(-\frac{(\dd\vec{a})\vec{a}^\intercal + \vec{a}(\dd\vec{a}^\intercal)}{\rho(\rho + \|\vec{a}\|^2)}+\frac{2\vec{a}^\intercal (\dd\vec{a}) \vec{a}\vec{a}^\intercal}{\rho(\rho + \|\vec{a}\|^2)^2}\right) + \mat{\Sigma}^{-1}(\vec{z}-\vec{x}) : \dd\vec{x} \\
    &= \mat{\Sigma}^{-1}\left(\mat{Z} - \mat{\Sigma}\right)\mat{\Sigma}^{-1} : \left(-\frac{(\dd\vec{a})\vec{a}^\intercal}{\rho(\rho + \|\vec{a}\|^2)}\right) + \mat{\Sigma}^{-1}\left(\mat{Z} - \mat{\Sigma}\right)\mat{\Sigma}^{-1}: \frac{\vec{a}^\intercal (\dd\vec{a}) \vec{a}\vec{a}^\intercal}{\rho(\rho + \|\vec{a}\|^2)^2} \\
    &\qquad +  \mat{\Sigma}^{-1}(\vec{z}-\vec{x}) : \dd\vec{x} \\
    &=-\frac{1}{\rho}\mat{\Sigma}^{-1}\left(\mat{Z} - \mat{\Sigma}\right)\vec{a} : \dd\vec{a} + \frac{(\dd\vec{a}^\intercal) \vec{a}}{\rho}\vec{a}^\intercal\left(\mat{Z} - \mat{\Sigma}\right)\vec{a} + \mat{\Sigma}^{-1}(\vec{z}-\vec{x}) : \dd\vec{x}\,.
\end{align*}
From the above, we have
\begin{align*}
    \grad_{\vec{a}}\log{p(\vec{z};\vec{\theta})} &= \frac{1}{\rho}\left(\mat{\Sigma}^{-1}\left(\mat{\Sigma} - \mat{Z}\right)\vec{a} + (\vec{a}^\intercal\left(\mat{Z} - \mat{\Sigma}\right)\vec{a})\vec{a}\right) \\
    &= \frac{1}{\rho}\left(\rho\left(\mat{\Sigma} - \mat{Z}\right)\vec{a} + (\vec{a}^\intercal\left(\mat{\Sigma} - \mat{Z}\right)\vec{a})\vec{a} + (\vec{a}^\intercal\left(\mat{Z} - \mat{\Sigma}\right)\vec{a})\vec{a}\right) \\
    &= \left(\mat{\Sigma} - \mat{Z}\right)\vec{a}\,, \\
    \grad_{\vec{x}}\log{p(\vec{z};\vec{\theta})} &= \mat{\Sigma}^{-1}(\vec{z}-\vec{x})\,.
\end{align*}
Applying the reparameterization trick $\vec{z} = \vec{x} + \mat{\Sigma}^{1/2}\vec{u}$ with $\vec{u}\sim \mathcal{N}(\vec{0}, \mat{I}_d)$ yields
\begin{align*}
    \grad_{\vec{a}}\log{p(\vec{z};\vec{\theta})} &= \frac{1}{\rho}\left(\mat{\Sigma}^{-1}(\mat{\Sigma}-\mat{\Sigma}^{1/2}\vec{u}\vec{u}^\intercal \mat{\Sigma}^{1/2})\vec{a} + (\vec{a}^\intercal(\mat{\Sigma}^{1/2}\vec{u}\vec{u}^\intercal \mat{\Sigma}^{1/2} - \mat{\Sigma})\vec{a})\vec{a}\right) \\
    &=\frac{1}{\rho}\left(\mat{\Sigma}^{-1/2}(\mat{I}_d - \vec{u}\vec{u}^\intercal)\mat{\Sigma}^{1/2}\vec{a} + (\vec{a}^\intercal \mat{\Sigma}^{1/2}(\vec{u}\vec{u}^\intercal - \mat{I}_d)\mat{\Sigma}^{1/2}\vec{a})\vec{a}\right) \\
    &=\frac{1}{\rho}\left(\mat{\Sigma}^{-1/2}\frac{(\vec{u}\vec{u}^\intercal - \mat{I}_d)\vec{a}}{\sqrt{\rho + \|\vec{a}\|^2}} + \frac{\vec{a}^\intercal(\vec{u}\vec{u}^\intercal - \mat{I}_d)\vec{a}}{\rho + \|\vec{a}\|^2}\vec{a}\right) \\
    &= \frac{1}{\rho}\left(\sqrt{\rho}\frac{(\vec{u}\vec{u}^\intercal - \mat{I}_d)\vec{a}}{\sqrt{\rho + \|\vec{a}\|^2}} + \frac{\vec{a}^\intercal(\vec{u}\vec{u}^\intercal - \mat{I}_d)\vec{a}}{\sqrt{\rho + \|\vec{a}\|^2}\left(\sqrt{\rho} + \sqrt{\rho + \|\vec{a}\|^2}\right)}\vec{a} + \frac{\vec{a}^\intercal(\vec{u}\vec{u}^\intercal - \mat{I}_d)\vec{a}}{\rho + \|\vec{a}\|^2}\vec{a}\right) \\
    &= \frac{1}{\rho(\rho + \|\vec{a}\|^2)}\left(\sqrt{\rho(\rho + \|\vec{a}\|^2)}(\vec{u}\vec{u}^\intercal - \mat{I}_d)\vec{a} + \frac{\left(\sqrt{\rho} + 2\sqrt{\rho + \|\vec{a}\|^2}\right)\vec{a}^\intercal(\vec{u}\vec{u}^\intercal - \mat{I}_d)\vec{a}}{\sqrt{\rho} + \sqrt{\rho + \|\vec{a}\|^2}}\vec{a}\right)\,, \\
    \grad_{\vec{x}}\log{p(\vec{z};\vec{\theta})} &= \mat{\Sigma}^{-1/2}\vec{u}\,.
\end{align*}
It is evident that this alternative approach for modeling the covariance of the search distribution incurs additional memory overhead by requiring the storage of $\vec{a}\in \R^d$, compared to $\vec{a}\in \R^n$ in \loren. Moreover, it demands increased computational cost for gradient estimation. Given that the primary objective of ZO optimization for fine-tuning LLMs is to minimize memory overhead relative to FO optimization, \loren naturally emerges as a efficient curvature-aware ZO method, making it a more suitable choice over this direct low-rank covariance approach.

\section{Decoupled Damping}
Ignoring the RLOO baseline, the parameter update for $\vec{x}$
can be simplified:
\begin{align*}
  \vec{x}
  &\gets \vec{x} - \eta \vec{g}(\vec{x}) =
    \vec{x} - \eta f\left(\vec{x} +
    \epsilon\overline{\mat{\Sigma}}^{1/2}\vec{u}\right)\epsilon^{-1}\overline{\mat{\Sigma}}^{1/2}\vec{u}
\\ 
  &= \vec{x} -
    \frac{\eta}{\sqrt{\rho}}
    f\left(\vec{x} +
    \epsilon\overline{\mat{\Sigma}}^{1/2}\vec{u}\right)
    \frac{1}{\epsilon}\left(\mat{I}_n
      - \kappa\vec{a}\vec{a}^\intercal\right)\vec{u}\,. 
\end{align*}
We observe that the effective step size $\eta/\sqrt{\rho}$ depends on
the damping factor $\rho$.
A small $\rho$ unintentionally increases
the effective step size, 
causing large and unstable
updates. Conversely, a large $\rho$ reduces the effective step size,
slowing down convergence and negatively impacting optimization
performance. To address 
this issue, we redefine the learning rate as $\eta' = \eta / \sqrt{\rho}$ such that it
inherently includes the $1/\sqrt{\rho}$ term, thus stabilizing the
effective update step size.
%
In this reformulation, the damping
parameter $\rho$ serves purely as a regularization term, ensuring
stable and appropriately scaled parameter updates. An analogous
decoupling is applied to the learning rate $\nu$ for covariance factor
updates, redefining it similarly to absorb the $1/\sqrt{\rho}$ factor
in Equation~\eqref{eq:grad_a}. 

\section{Convergence Analysis}
\label{sec:supp_convergence}
We make the following assumptions to establish the convergence
property of \loren.

\begin{assumption}[Smoothness] \label{assm:smoothness}
  The objective function $f:\R^{d}\to \R$ is $L$-smooth, meaning
  $\grad{f}$ satisfies
  \[
    \norm{\grad{f}(\vec{x}) - \grad{f}(\vec{y})}\leq
    L\norm{\vec{x}-\vec{y}}\,, \quad \forall \vec{x}, \vec{y}\in \R^{d}\,.
  \]
\end{assumption}
\begin{assumption}[Bounded Variance] \label{assm:bounded_variance}
  The variance of the stochastic gradient $\grad{f}(\vec{x}; \xi)$ is
  bounded by $\sigma^{2}$. That is,
  \[
    \E\left[\norm{\grad{f}(\vec{x};\xi) -
        \grad{f}(\vec{x})}^{2}\right]\leq \sigma^{2}\,, \quad \forall
    \vec{x}\in \R^{d}\,.
  \]
\end{assumption}

\subsection{Proof of Proposition~\ref{prop:grad_gauss_rs}}
We have
\begin{align*}
  \grad{f}_{\epsilon,\mat{\Sigma}}(\vec{x})
  &=\grad_{\vec{x}} \left\{%
    \E_{\vec{u}\sim\mathcal{N}(\vec{0},\mat{\Sigma})}\left[
    f(\vec{x} + \epsilon\vec{u})\right]\right\} \\
  &=(2\pi)^{-d/2}\det(\mat{\Sigma})^{-1/2}\grad_{\vec{x}}\int
    f(\vec{x}+\epsilon\vec{u})
    \exp(-\frac{1}{2}\vec{u}^{\intercal}\mat{\Sigma}^{-1}\vec{u})
    \dd{\vec{u}}\,. \\
  &=(2\pi)^{-d/2}\det(\mat{\Sigma})^{-1/2}\int
    \grad_{\vec{x}} f(\vec{x}+\epsilon\vec{u})
    \exp(-\frac{1}{2}\vec{u}^{\intercal}\mat{\Sigma}^{-1}\vec{u})
    \dd{\vec{u}}\,. \\
  \intertext{Using the change of variables $\vec{z} = \vec{x} +
  \epsilon\vec{u}$, we get }
  &=(2\pi)^{-d/2}\det(\mat{\Sigma})^{-1/2}\int
    \grad_{\vec{z}} f(\vec{z})\exp(-\frac{1}{2\epsilon^{2}}(\vec{z}-\vec{x})^{\intercal}\mat{\Sigma}^{-1}(\vec{z}-\vec{u}))     \abs*{\pdv{\vec{u}}{\vec{z}}} \dd{\vec{z}}\\
  &=(2\pi)^{-d/2}\det(\mat{\Sigma})^{-1/2}\epsilon^{-d}\int\grad_{\vec{z}}f(\vec{z})    
    \exp(-\frac{1}{2\epsilon^{2}}(\vec{z}-\vec{x})^{\intercal}\mat{\Sigma}^{-1}(\vec{z}-\vec{u}))
    \dd{\vec{z}}\,. \\  
  &=\E_{\vec{z}\sim\mathcal{N}(\vec{x},\epsilon^{2}\mat{\Sigma})}\left[%
    \grad{f}(\vec{z})\right] \\
  \intertext{By Stein's identity for the normal distribution, we have }
  &=\E_{\vec{z}\sim\mathcal{N}(\vec{x},\varepsilon^{2}\mat{\Sigma})}\left[%
    \epsilon^{-2}\mat{\Sigma}^{-1}(\vec{z}-\vec{x}) f(\vec{z})\right] \\
  &=(2\pi)^{-d/2}\det(\epsilon^{2}\mat{\Sigma})^{-1/2}\int
    \epsilon^{-2}\mat{\Sigma}^{-1}(\vec{z}-\vec{x})f(\vec{z})
    \exp(-\frac{1}{2\epsilon^{2}}(\vec{z}-\vec{x})^{\intercal}\mat{\Sigma}^{-1}(\vec{z}-\vec{x}))
    \dd{\vec{z}}\,.\\
  \intertext{Applying the change of variables $\vec{u} =
  \frac{\vec{z}-\vec{x}}{\epsilon}$ one more time gives}
  &=(2\pi)^{-d/2}\det(\epsilon^{2}\mat{\Sigma})^{-1/2}\epsilon^{-1}\int
    \mat{\Sigma}^{-1}\vec{u}f(\vec{x}+\epsilon\vec{u})
    \exp(-\frac{1}{2}\vec{u}\mat{\Sigma}^{-1}\vec{u})
    \abs*{\pdv{\vec{z}}{\vec{u}}} \dd{\vec{u}} \\
  &=(2\pi)^{-d/2}\det(\mat{\Sigma})^{-1/2}\varepsilon^{-1}\int
    \mat{\Sigma}^{-1}\vec{u}f(\vec{x}+\epsilon\vec{u})
    \exp(-\frac{1}{2}\vec{u}\mat{\Sigma}^{-1}\vec{u}) \dd{\vec{u}}\\
  &=\E_{\vec{u}\sim\mathcal{N}(\vec{0},\mat{\Sigma})}\left[
    \epsilon^{-1}f(\vec{x}+\epsilon\vec{u}) \mat{\Sigma}^{-1}\vec{u}\right]
\end{align*}
Similarly, we can show that $\grad{f}_{\epsilon,\mat{\Sigma}}(\vec{x})
= -\E_{\vec{u}\sim\mathcal{N}(\vec{0},\mat{\Sigma})}\left[
  \epsilon^{-1}f(\vec{x}-\epsilon\vec{u})
  \mat{\Sigma}^{-1}\vec{u}\right]$. Combining these two completes
the proof. Note that we also have
\[
  \grad{f}_{\epsilon,\mat{\Sigma}}(\vec{x}) =
  \E_{\vec{u}\sim\mathcal{N}(\vec{0},\mat{\Sigma})}\left[ \frac{f(\vec{x}+\epsilon\vec{u}) -
      f(\vec{x})}{\epsilon}\mat{\Sigma}^{-1}\vec{u}  \right]\,.
\]

\subsection{Bound on the gradient}
The RLOO gradient estimate is given by
\begin{equation} \label{eq:ca_grad_est}
  \tvec{g}_{\epsilon,\mat{\Sigma}}(\vec{x}_t)
  =\frac{1}{\epsilon(K-1)}\sum_{k=1}^{K}\left(f(\vec{x}+\epsilon\mat{\Sigma}^{1/2}\vec{u}_k;\xi) -
    \frac{1}{K}\sum_{j=1}^{K}f(\vec{x}+\epsilon\mat{\Sigma}^{1/2}\vec{u}_j;\xi)\right)
  \mat{\Sigma}^{1/2}\vec{u}_k\,. 
\end{equation}
We first show that the gradient estimate in~\eqref{eq:ca_grad_est} is
equal to the preconditioned gradient of random approximation of $f$.
$\tvec{g}_{\epsilon,\mat{\Sigma}}(\vec{x}_t)$.
\begin{align*}
  &\hspace{-1.5em}\E_{\vec{u}_{1:K},\xi}\left[\tvec{g}_{\epsilon,\mat{\Sigma}}(\vec{x})\right]\\
  &=\E_{\vec{u}_{1:K},\xi}\left[
    \frac{1}{\epsilon(K-1)}\sum_{k=1}^{K}\left(f(\vec{x}+\epsilon\mat{\Sigma}^{1/2}\vec{u}_k;\xi) -
    \frac{1}{K}\sum_{j=1}^{K}f(\vec{x}+\epsilon\mat{\Sigma}^{1/2}\vec{u}_j;\xi)\right)
    \mat{\Sigma}^{1/2}\vec{u}_k
    \right]\\
  &=\E_{\vec{u}_{1:K},\xi}\left[
    \frac{1}{\epsilon K}\sum_{k=1}^{K}\left(f(\vec{x}+\epsilon\mat{\Sigma}^{1/2}\vec{u}_k;\xi) -
    \frac{1}{K-1}\sum_{j\neq
    k}f(\vec{x}+\epsilon\mat{\Sigma}^{1/2}\vec{u}_j;\xi)\right)
    \mat{\Sigma}^{1/2}\vec{u}_k 
    \right] \\
  &=\frac{1}{\epsilon K}\sum_{k=1}^{K} \E_{\vec{u}_{1:K},\xi}\left[
    f(\vec{x}+\epsilon\mat{\Sigma}^{1/2}\vec{u}_k;\xi)\mat{\Sigma}^{1/2}\vec{u}_k\right]
  \\
  &\qquad 
    -\frac{1}{K-1}\sum_{j\neq  k}\E_{\vec{u}_{1:K},\xi}\left[
    f(\vec{x}+\epsilon\mat{\Sigma}^{1/2}\vec{u}_j)\mat{\Sigma}^{1/2}\vec{u}_k\right] \\
  &=\frac{1}{\epsilon K}\sum_{k=1}^{K} \E_{\vec{u}_{1:K},\xi}\left[
    f(\vec{x}+\epsilon\mat{\Sigma}^{1/2}\vec{u}_k;\xi)\mat{\Sigma}^{1/2}\vec{u}_k\right]
  \\
  &\qquad 
    -\frac{1}{K-1}\sum_{j\neq  k} \E_{\vec{u}_{j},\xi}\left[
    f(\vec{x}+\epsilon\mat{\Sigma}^{1/2}\vec{u}_j;\xi)\right]\cdot
    \E_{\vec{u}_{k}}\left[\mat{\Sigma}^{1/2}\vec{u}_k\right]\\
  &=\frac{1}{\epsilon K}\sum_{k=1}^{K} \E_{\vec{u}_{1:K},\xi}\left[
    f(\vec{x}+\epsilon\mat{\Sigma}^{1/2}\vec{u}_k;\xi)\mat{\Sigma}^{1/2}\vec{u}_k\right]
  \\
  &=\epsilon^{-1}\E_{\vec{u},\xi}\left[f(\vec{x}+\epsilon\mat{\Sigma}^{1/2}\vec{u};\xi)
    \mat{\Sigma}^{1/2}\vec{u}\right] \\
  &=
    \E_{\tvec{u},\xi}\left[\frac{f(\vec{x}+\epsilon\tvec{u};\xi)
    -f(\vec{x};\xi)}{\epsilon} \tvec{u}\right] =
    \tilde{\mat{H}}^{-1}\grad{f}_{\epsilon,\mat{\Sigma}}(\vec{x}) \,,
    \quad \tvec{u}\sim\mathcal{N}(\vec{0}, \mat{\Sigma})\,.
\end{align*}
From the first order Taylor approximation of $f$, we have
\[
  f(\vec{x}+\epsilon\mat{\Sigma}^{1/2}\vec{u}) = f(\vec{x}) +
  \epsilon\grad{f}(\vec{x})^{\intercal}\mat{\Sigma}^{1/2}\vec{u} +
  \mathcal{O}(\epsilon^{2})\,.
\]

\paragraph{Bound on the gradient norm} We can bound the norm of gradient
estimate as
\begin{align*}
  \E_{\vec{u}}\left[\norm{\tvec{g}_{\epsilon,\mat{\Sigma}}(\vec{x})}^{2}\right]
  &= \E_{\vec{u}}\left[%
    \norm{\mat{\Sigma}^{1/2}\vec{u}\vec{u}^{\intercal}\mat{\Sigma}^{1/2} \grad{f}(\vec{x})
    + \mathcal{O}(\epsilon)}^{2}
    \right] \\
  &\leq 2\E_{\vec{u}}\left[%
    \norm{\mat{\Sigma}^{1/2}\vec{u}\vec{u}^{\intercal}\mat{\Sigma}^{1/2}
    \grad{f}(\vec{x})}^{2}\right]
    + \mathcal{O}(\epsilon^{2}) \\
  &=2\E_{\vec{u}}\left[%
    \grad{f}(\vec{x})^{\intercal}\mat{\Sigma}^{1/2}\vec{u}\vec{u}^{\intercal}
    \mat{\Sigma}\vec{u}\vec{u}^{\intercal}\mat{\Sigma}^{1/2}\grad{f}(\vec{x})\right]
    + \mathcal{O}(\epsilon^{2}) \\
  &=2\E_{\vec{u}}\left[%
    (\vec{u}^{\intercal}\mat{\Sigma}^{1/2}\grad{f}(\vec{x}))^{2}\vec{u}^{\intercal}\mat{\Sigma}\vec{u}
    \right] + \mathcal{O}(\epsilon^{2}) \\
  &=2\Trace(\mat{\Sigma}) \cdot \grad{f}(\vec{x})^{\intercal}\mat{\Sigma}
    \grad{f}(\vec{x}) +
    4\grad{f}(\vec{x})^{\intercal}\mat{\Sigma}^{2}\grad{f}(\vec{x}) +
    \mathcal{O}(\epsilon^{2}) \\
  &\leq 2\left(\Trace(\mat{\Sigma}) + 2\rho^{-1}\right)
    \grad{f}(\vec{x})^{\intercal}\mat{\Sigma}\grad{f}(\vec{x})
    + \mathcal{O}(\epsilon^{2})\,,
\end{align*}
where the last equality is due to Lemma~\ref{lem:grad_norm_usu} and
the last inequality used the fact that the largest eigenvalue of
$\mat{\Sigma}$ is $\rho^{-1}$.
\begin{lemma} \label{lem:grad_norm_usu}
  For $\vec{u}\sim\mathcal{N}(\vec{0}, \mat{I}_d)$, a symmetric
  matrix $\mat{\Sigma}\in \R^{d\times d}$, and a fixed vector
  $\vec{v}\in \R^{d}$, we have
  \[
    \E_{\vec{u}}\left[(\vec{u}^{\intercal}\mat{\Sigma}^{1/2}\vec{v})^{2}\vec{u}\mat{\Sigma}\vec{u}\right]
    =\Trace(\mat{\Sigma}) \cdot \vec{v}^{\intercal}\mat{\Sigma}\vec{v}
    + 2\vec{v}^{\intercal}\mat{\Sigma}^{2}\vec{v}\,.
  \]
\end{lemma}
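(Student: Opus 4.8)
The plan is to reduce the claim to a standard fourth-moment identity for the isotropic Gaussian. First I would substitute $\vec{w} := \mat{\Sigma}^{1/2}\vec{v}$, so the quantity to evaluate becomes $\E_{\vec{u}}[(\vec{u}^{\intercal}\vec{w})^{2}\,\vec{u}^{\intercal}\mat{\Sigma}\vec{u}]$, a degree-four polynomial in the entries of $\vec{u}$. Writing $\vec{u}^{\intercal}\mat{\Sigma}\vec{u} = \Trace(\mat{\Sigma}\vec{u}\vec{u}^{\intercal})$ and pulling the deterministic trace outside the expectation, the left-hand side equals $\Trace\!\big(\mat{\Sigma}\,\mat{T}\big)$ where $\mat{T} := \E_{\vec{u}}[(\vec{u}^{\intercal}\vec{w})^{2}\,\vec{u}\vec{u}^{\intercal}]$. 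So the whole problem reduces to computing the matrix $\mat{T}$.

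The core step is to show $\mat{T} = \norm{\vec{w}}^{2}\mat{I}_d + 2\,\vec{w}\vec{w}^{\intercal}$. This is routine via Isserlis' (Wick's) formula: entrywise, $\mat{T}_{kl} = \sum_{i,j} w_i w_j\,\E[u_i u_j u_k u_l]$, and for $\vec{u}\sim\mathcal N(\vec 0,\mat I_d)$ one has $\E[u_i u_j u_k u_l] = \delta_{ij}\delta_{kl} + \delta_{ik}\delta_{jl} + \delta_{il}\delta_{jk}$. The first pairing contributes $\norm{\vec{w}}^{2}\delta_{kl}$ and the other two each contribute $w_k w_l$, giving $\mat{T}_{kl} = \norm{\vec{w}}^{2}\delta_{kl} + 2 w_k w_l$ as claimed. (Alternatively one may invoke rotational invariance: reducing to $\vec{w} = \norm{\vec{w}}\vec{e}_1$ makes the identity a two-line computation with the scalar fourth moment $\E[u_1^4]=3$ and $\E[u_1^2 u_j^2]=1$.)

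Finally I would substitute back: $\Trace(\mat{\Sigma}\mat{T}) = \norm{\vec{w}}^{2}\Trace(\mat{\Sigma}) + 2\,\vec{w}^{\intercal}\mat{\Sigma}\vec{w}$, and then use $\vec{w} = \mat{\Sigma}^{1/2}\vec{v}$ together with the symmetry of $\mat{\Sigma}^{1/2}$ to get $\norm{\vec{w}}^{2} = \vec{v}^{\intercal}\mat{\Sigma}\vec{v}$ and $\vec{w}^{\intercal}\mat{\Sigma}\vec{w} = \vec{v}^{\intercal}\mat{\Sigma}^{2}\vec{v}$, yielding exactly $\Trace(\mat{\Sigma})\,\vec{v}^{\intercal}\mat{\Sigma}\vec{v} + 2\,\vec{v}^{\intercal}\mat{\Sigma}^{2}\vec{v}$. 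I do not expect any genuine obstacle here; the only point demanding care — and the reason the stated identity has no stray transpose in the cross term — is that $\mat{\Sigma}$ (hence $\mat{\Sigma}^{1/2}$) is symmetric, so that $\vec{w}^{\intercal}\mat{\Sigma}\vec{w}$ collapses cleanly to $\vec{v}^{\intercal}\mat{\Sigma}^{2}\vec{v}$; the bookkeeping of which factor is symmetric where is the one thing to keep straight throughout.
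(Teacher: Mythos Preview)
Your proposal is correct and follows essentially the same approach as the paper: both substitute $\vec{w}=\mat{\Sigma}^{1/2}\vec{v}$ (the paper calls it $\vec{g}$) and reduce the computation to the Isserlis/Wick identity $\E[u_iu_ju_ku_\ell]=\delta_{ij}\delta_{k\ell}+\delta_{ik}\delta_{j\ell}+\delta_{i\ell}\delta_{jk}$, then sum the three pairings. The only cosmetic difference is that you first package the expectation as $\Trace(\mat{\Sigma}\mat{T})$ with $\mat{T}=\E[(\vec{u}^{\intercal}\vec{w})^{2}\vec{u}\vec{u}^{\intercal}]$ before applying Wick, whereas the paper expands everything in one index sum; the underlying computation is identical.
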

\begin{proof}
  Let $\vec{g} = \mat{\Sigma}^{1/2}\vec{v}$.
  \begin{align*}
    \E_{\vec{u}}\left[(\vec{u}^{\intercal}\mat{\Sigma}^{1/2}\vec{v})^{2}\vec{u}\mat{\Sigma}\vec{u}\right]
    &=\E\left[\left(\sum_{i=1}^{d}u_ig_i\right)^{2}
      \left(\sum_{k=1}^{d}u_k\sum_{\ell=1}^{d}\Sigma_{k\ell}u_\ell\right)
      \right] \\
    &=\E\left[\left(\sum_{i=1}^{d}\sum_{j=1}^{d}u_iu_jg_ig_j\right)
      \left(\sum_{k=1}^{d}\sum_{\ell=1}^{d}u_ku_\ell\Sigma_{k\ell}\right)\right]\\
    &=\E\left[\sum_{i,j,k,\ell}
      (u_iu_ju_ku_\ell)g_ig_j\Sigma_{k\ell}\right]\\
    &=\sum_{i,j,k,\ell} \E\left[u_iu_ju_ku_\ell\right]
      g_ig_j\Sigma_{k\ell}
    \\
    \intertext{Since the first and third moments of $\mathcal{N}(0, 1)$ are 0, we
    have}
    &=\sum_{i,j,k,\ell} (\delta_{ij}\delta_{k\ell} +
      \delta_{ik}\delta_{j\ell} + \delta_{i\ell}\delta_{jk})g_ig_j\Sigma_{k\ell}\,,
  \end{align*}
  where $\delta_{ij}$ is the Kronecker delta that returns 1 if $i=j$
  and 0 otherwise.
  \begin{enumerate}[label=(\roman*)]
  \item When $\delta_{ij}\delta_{k\ell} = 1$,
    \[
      \sum_{i,j,k,\ell}g_ig_j\Sigma_{k\ell} =
      \sum_{i,k}g_i^{2}\Sigma_{k,k} = \sum_{i}g_i^{2} \cdot
      \Trace(\mat{\Sigma}) = \norm{\vec{g}}^{2}\cdot \Trace(\mat{\Sigma})\,.
    \]
  \item When $\delta_{ik}\delta_{j\ell}=1$,
    \[
      \sum_{i,j,k,\ell}g_ig_j\Sigma_{k\ell} =
      \sum_{i,j}g_ig_j\Sigma_{ij} = \vec{g}^{\intercal}\mat{\Sigma}\vec{g}\,.
    \]
  \item When $\delta_{i\ell}\delta_{jk}=1$,
    \[
      \sum_{i,j,k,\ell}g_ig_j\Sigma_{k\ell} =
      \sum_{i,j}g_ig_j\Sigma_{ji} = \vec{g}^{\intercal}\mat{\Sigma}\vec{g}\,.
    \]
  \end{enumerate}  
  Combining these three cases gives the result.
\end{proof}
For a positive definite matrix $\mat{\Sigma}$, we define $\mat{\Sigma}$-norm of $\vec{x}$ as 
$\norm{\vec{x}}_{\mat{M}} = \sqrt{\vec{x}^\intercal\mat{M}\vec{x}}$.
The $\mat{\Sigma}$-norm of the stochastic gradient is
\begin{align*}
  \E\left[\norm{\grad{f}(\vec{x}_t;\xi_t)}^{2}_{\mat{\Sigma}_t}\right]
  &=\E\left[\norm{\grad{f}(\vec{x}_t;\xi_t) +
    \grad{f}(\vec{x}_t)-\grad{f}(\vec{x}_t)}^{2}_{\mat{\Sigma}_t}\right]
  \\
  &\leq 2\norm{\grad{f}(\vec{x}_t)}_{\mat{\Sigma}_t}^{2}
    + 2
    \E\left[\norm{\grad{f}(\vec{x}_t;\xi_t)-\grad{f}(\vec{x}_t)}^{2}_{\mat{\Sigma}_t}\right]\\
  &\leq 2\norm{\grad{f}(\vec{x}_t)}_{\mat{\Sigma}_t}^{2}
    +
    2\rho^{-1}\E\left[\norm{\grad{f}(\vec{x}_t;\xi_t)-\grad{f}(\vec{x}_t)}^{2}\right]\\
  &\leq 2\norm{\grad{f}(\vec{x}_t)}_{\mat{\Sigma}_t}^{2} +
    2\rho^{-1}\sigma^{2}\,.
\end{align*}

\subsection{Proof of Theorem~\ref{thm:convergence}}
From the $L$-smoothness of $f$,
\begin{align*}
  \E_{\vec{u}}\left[f(\vec{x}_{t+1};\xi_{t+1})\right]
  &\leq f(\vec{x}_{t};\xi_t) -\eta_t\E_{\vec{u}}\left[
    \grad{f}(\vec{x}_t;\xi_t)^{\intercal}  \tvec{g}_{\epsilon,\mat{\Sigma}}(\vec{x}_t)\right] +
    \frac{L\eta_t^{2}}{2}
    \E_{\vec{u}}\left[\norm{\tvec{g}_{\epsilon,\mat{\Sigma}}(\vec{x}_t)}^{2}\right]
  \\
  &\leq f(\vec{x}_t; \xi_t) -
    \eta_t\norm{\grad{f}(\vec{x}_t; \xi_t)}^{2}_{\mat{\Sigma}}
    + \eta_t\mathcal{O}(\epsilon\norm{\grad{f}(\vec{x}_t; \xi_t)}) \\
  &\qquad
    + 2\eta_t^{2}L(\Trace(\mat{\Sigma}_t) + 2\rho^{-1})\norm{\grad{f}(\vec{x}_t; \xi_t)}_{\mat{\Sigma}_t}^{2} \\
  &\qquad + 2\eta_t^{2}L(\Trace(\mat{\Sigma}_t) +2\rho^{-1})\rho^{-1}\sigma^{2} + \mathcal{O}(\epsilon^{2})\\
  &\leq f(\vec{x}_t; \xi_t)
    -\frac{\eta_t}{2}\norm{\grad{f}(\vec{x}_t;\xi_t)}^{2}_{\mat{\Sigma}_t}
    + 2\eta_t^{2}L(\Trace(\mat{\Sigma}) +
    2\rho^{-1})\norm{\grad{f}(\vec{x}_t; \xi_t)}_{\mat{\Sigma}_t}^{2}
  \\
  &\qquad
    + 2\eta_t^{2}L(\Trace(\mat{\Sigma}_t)
    +2\rho^{-1})\rho^{-1}\sigma^{2} + \mathcal{O}(\epsilon^{2})\\
  &=f(\vec{x}_t; \xi_t)-\frac{\eta_t}{2}\left(1 - 4\eta_tL(\Trace(\mat{\Sigma}_t) +
    2\rho^{-1})\right)\norm{\grad{f}(\vec{x}_t;\xi_t)}^{2}_{\mat{\Sigma}_t}
  \\
  &\qquad + 2\eta_t^{2}L(\Trace(\mat{\Sigma}_t) + 2\rho^{-1})
    \rho^{-1}\sigma^{2} + \mathcal{O}(\epsilon^{2})\,. \\
  \intertext{With the choice of $\eta_t = \eta = \frac{1}{8L\sqrt{T}(\max_t
  \Trace(\mat{\Sigma}_t) + 2\rho^{-1})}$, we have}
  &\leq f(\vec{x}_t; \xi_t)-\frac{\eta_t}{4}\norm{\grad{f}(\vec{x}_t;\xi_t)}^{2}_{\mat{\Sigma}_t}
  + 2\eta_t^{2}L(\Trace(\mat{\Sigma}_t)+2\rho^{-1})
    \rho^{-1}\sigma^{2} + \mathcal{O}(\epsilon^{2})\,.
\end{align*}
Rearranging the equation yields
\begin{align*}
  \E\left[\norm{\grad{f}(\vec{x}_t;\xi_t)}^{2}_{\mat{\Sigma}_t}\right]
  &\leq \frac{4\E\left[f(\vec{x}_t; \xi_t) - f(\vec{x}_{t+1};\xi_{t+1})\right]}{\eta_t}
    + 8\eta_tL(\Trace(\mat{\Sigma}_t) +
    2\rho^{-1})\frac{\sigma^{2}}{\rho}
    + \mathcal{O}(\epsilon^{2})\\
  \intertext{Summing the equations for $t=1, 2, \ldots, T$, we obtain}
  \E\left[\sum_{t=1}^{T}
  \norm{\grad{f}(\vec{x}_t;\xi_t)}_{\mat{\Sigma}_t}^{2}\right]
  &\leq \frac{4\left(f(\vec{x}_1; \xi_1) -
    f(\vec{x}_{T+1};\xi_{T+1})\right)}{\eta}
    +\frac{\sigma^{2}\sqrt{T}}{\rho}
    + \mathcal{O}(T\epsilon^{2}) \\
  &\leq \frac{4\left(f(\vec{x}_1; \xi_1) - f(\vec{x}_{*};\xi_{*})\right)}{\eta}
    +\frac{\sigma^{2}\sqrt{T}}{\rho}
    + \mathcal{O}(T\epsilon^{2})\,. \\
  \intertext{From the above, we get}
  \min_{t=1:T}\E\left[\norm{\grad{f}(\vec{x}_t;\xi_t)}^{2}\right]
  &\leq
    \frac{1}{T}\E\left[\sum_{t=1}^{T}\norm{\grad{f}(\vec{x}_t;\xi_t)}^{2}\right]
    \leq \frac{1}{T\alpha_{\min}}\E\left[\sum_{t=1}^{T}
    \norm{\grad{f}(\vec{x}_t;\xi_t)}_{\mat{\Sigma}_t}^{2}\right]\\
  &\leq \frac{4\left(f(\vec{x}_1; \xi_1) - f(\vec{x}_{*};\xi_{*})\right)}{T\eta\alpha_{\min}} +
    \frac{\sigma^{2}}{\sqrt{T}\alpha_{\min}} +
    \mathcal{O}(\epsilon^{2})\\
  &=\frac{32L(\min_t \Trace(\mat{\Sigma}_t) +
    2\rho^{-1})\left(f(\vec{x}_1; \xi_1) -
    f(\vec{x}_{*};\xi_{*})\right)}{\sqrt{T}\alpha_{\min}}
    + \frac{\sigma^{2}}{\sqrt{T}\alpha_{\min}} +
    \mathcal{O}(\epsilon^{2})\,,
\end{align*}
where $\alpha_{\min} = (\rho + \max_t\norm{\vec{a}_t}^{2})^{-1}$ is the smallest
eigenvalue of $\mat{\Sigma}_t$.

\section{Additional Experimental Results}

\subsection{Training Loss Curves on GLUE Benchmarks} \label{apdx:glue_train_loss}
\begin{figure}[b]
    \centering
    \begin{subfigure}{0.245\textwidth}
        \centering
        \includegraphics[width=\textwidth]{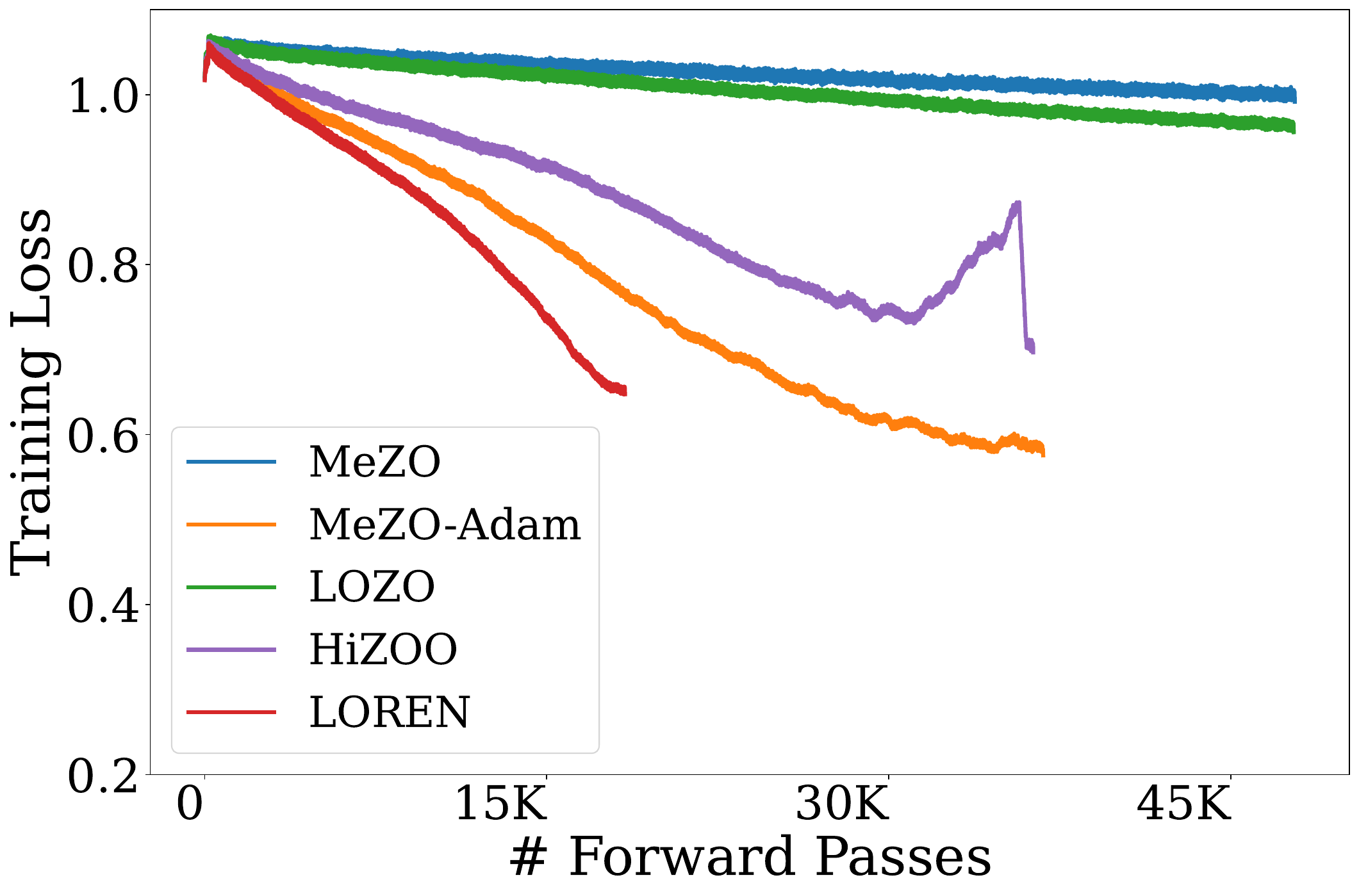}
        \subcaption{MNLI}
        \label{fig:train_loss_mnli}
    \end{subfigure}
    \begin{subfigure}{0.245\textwidth}
        \centering
        \includegraphics[width=\textwidth]{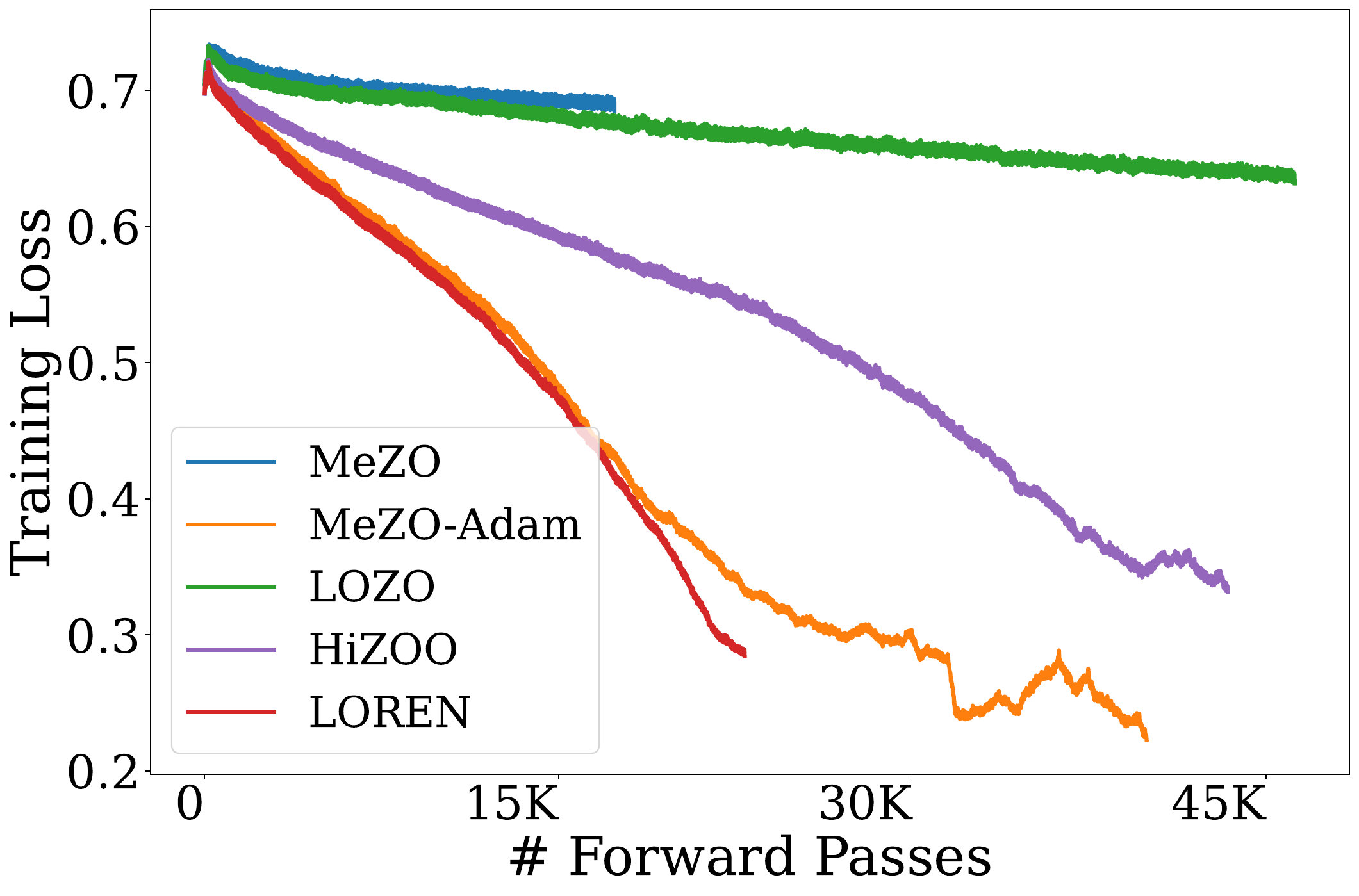}
        \subcaption{QNLI}
        \label{fig:train_loss_qnli}
    \end{subfigure}
    \begin{subfigure}{0.245\textwidth}
        \centering
        \includegraphics[width=\textwidth]{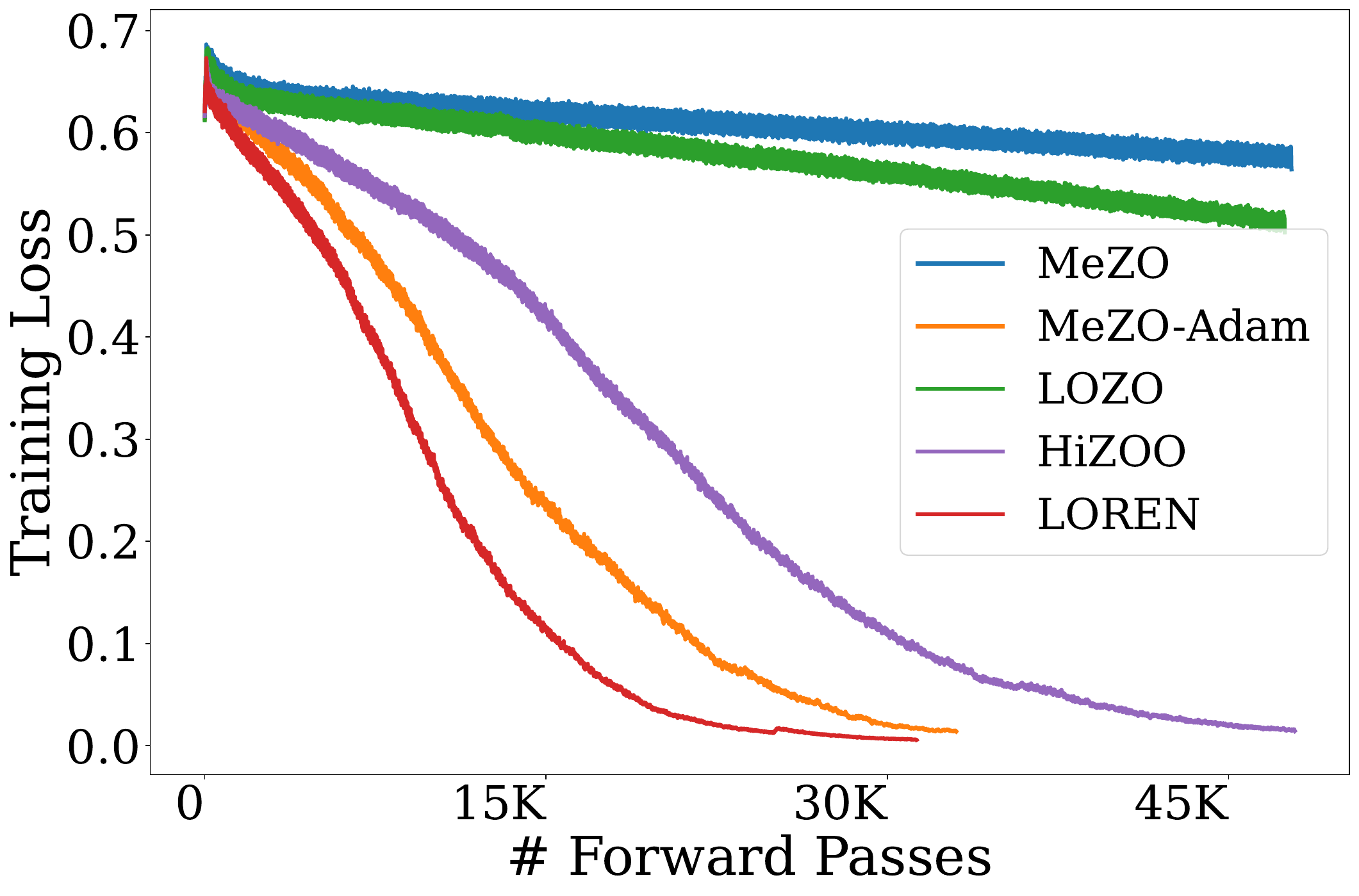}
        \subcaption{SST-2}
        \label{fig:train_loss_sst2}
    \end{subfigure}
    \begin{subfigure}{0.245\textwidth}
        \centering
        \includegraphics[width=\textwidth]{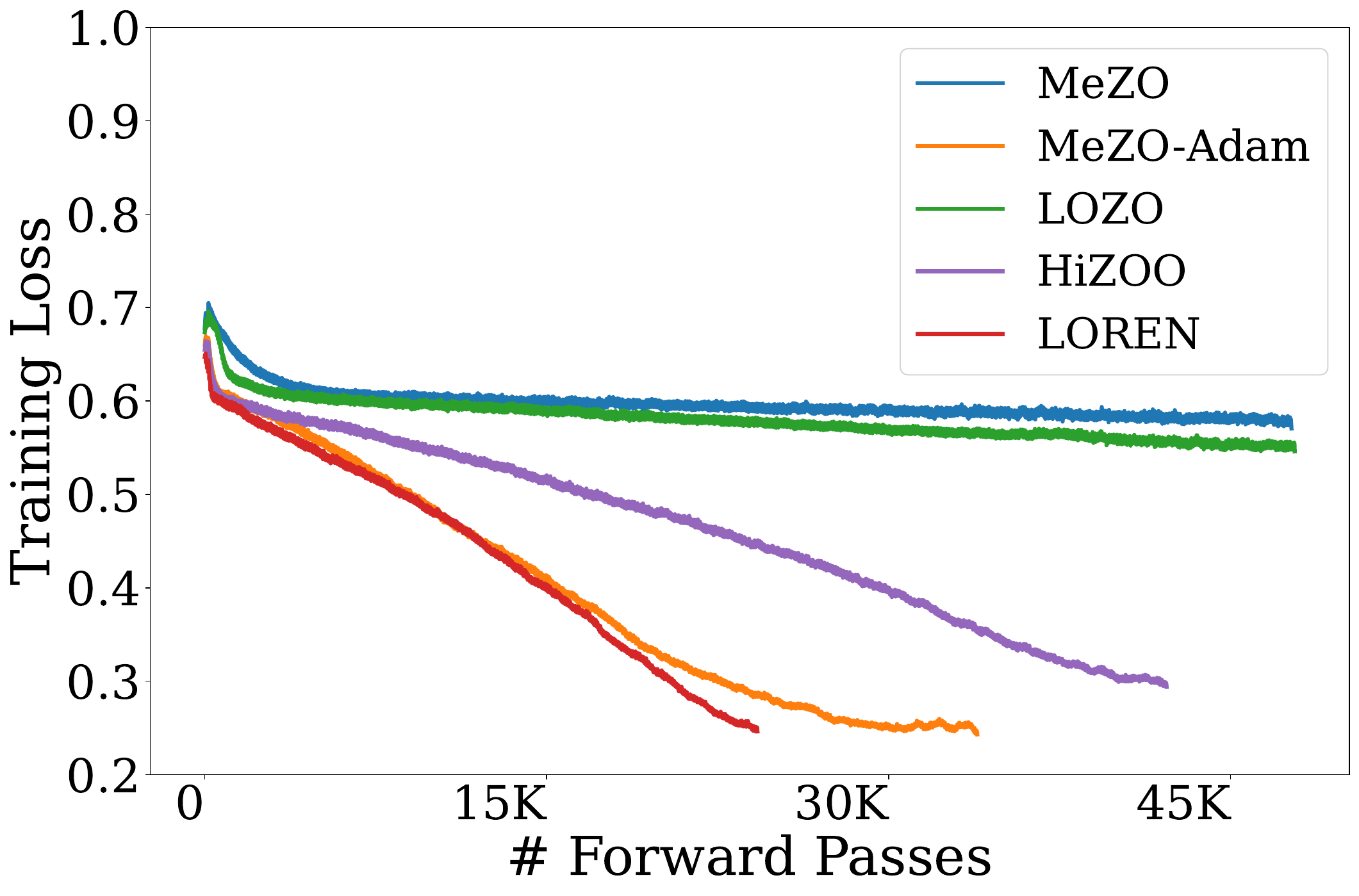}
        \subcaption{CoLA}
        \label{fig:train_loss_cola}
    \end{subfigure}
    \caption{Training loss curves for different ZO optimizers when fine-tuning GPT-2-XL on GLUE tasks.}
    \label{fig:train_loss_glue}
\end{figure}

In Figure~\ref{fig:train_loss_glue}, which depicts GPT-2-XL fine-tuning on GLUE, \loren’s curve drops more sharply than any other ZO method and attains its lowest loss in roughly half the evaluations needed by LOZO or HiZOO.

\subsection{Fine-tuning using Default Number of Forward Passes} \label{apdx:2fwd_pass}

Table~\ref{tab:2pass_result} presents results when each ZO optimizer is run using its own default number of forward passes per step—typically two for MeZO, MeZO-Adam, and LOZO, three for MeZO-SVRG (on average when $q=2$) and HiZOO. Although these settings reflect how each optimizer is commonly configured, they lead to degraded accuracy across most tasks compared to results under a standardized 6-pass budget. This degradation is expected, as fewer forward passes produce noisier gradient estimates. For instance, MeZO-Adam and HiZOO show substantial performance drops, particularly on RoBERTa-large, where MeZO-Adam’s average accuracy decreases by more than 13 points. Thus we adopt the same 6-pass setup across all baselines in our main experiments to ensure a fair and consistent comparison.

\begin{table}[tb]
    \centering
    \caption{Experimental results on DistilBERT and RoBERTa using each optimizer's default number of forward passes per step. Reported metrics include best accuracy (\%) with standard deviation over 5 runs and the averaged accuracy across 4 benchmark tasks from GLUE.}
    \footnotesize
    \begin{tabular}{lcccccc}
        \toprule
        & \multicolumn{6}{c}{\textbf{DistilBERT (66M) --- FP32}} \\
        \cmidrule(lr){2-6}
        Task & MNLI & QNLI & SST-2 & CoLA & Avg & 6-Pass Avg  \\ 
        \midrule
        MeZO & 39.8\scriptsize{$\pm$0.0} & 48.6\scriptsize{$\pm$0.8} & 61.9\scriptsize{$\pm$0.7} & 67.0\scriptsize{$\pm$0.3} & 54.3 (-0.1) & 54.4\\
        MeZO-Adam & 40.4\scriptsize{$\pm$0.5} & 69.4\scriptsize{$\pm$2.1} & 77.8\scriptsize{$\pm$0.8} & 66.4\scriptsize{$\pm$0.3} & 63.5 (-1.1) & 64.6\\
        MeZO-SVRG & 42.7\scriptsize{$\pm$1.1} & 65.6\scriptsize{$\pm$1.4} & 73.8\scriptsize{$\pm$2.2} & 65.8\scriptsize{$\pm$0.3} & 61.9 (+0.3) & 61.6 \\
        LOZO & 39.9\scriptsize{$\pm$0.1} & 51.6\scriptsize{$\pm$1.2} & 61.8\scriptsize{$\pm$0.7} & 66.0\scriptsize{$\pm$0.3} & 54.8 (-0.5)  & 55.3\\
        HiZOO & 39.9\scriptsize{$\pm$0.1} & 64.7\scriptsize{$\pm$4.7} & 76.5\scriptsize{$\pm$1.1} & 66.7\scriptsize{$\pm$0.8} & 62.0 (-0.6) & 62.6\\
        \midrule
        \loren & 39.8\scriptsize{$\pm$0.0} & 73.0\scriptsize{$\pm$2.0} & 81.7\scriptsize{$\pm$1.0} & 67.2\scriptsize{$\pm$0.8} & \textbf{65.4} & -- \\
        \midrule
        & \multicolumn{6}{c}{\textbf{RoBERTa‐large (355M) --- FP32}} \\
        \cmidrule(lr){2-6}
        Task & MNLI & QNLI & SST-2 & CoLA & Avg & 6-Pass Avg   \\  
        \midrule
        MeZO & 40.0\scriptsize{$\pm$0.1} & 73.3\scriptsize{$\pm$0.7} & 54.6\scriptsize{$\pm$0.5} & 66.7\scriptsize{$\pm$0.5} & 58.6 (+0.2) & 58.4 \\
        MeZO‐Adam & 43.0\scriptsize{$\pm$4.0} & 77.0\scriptsize{$\pm$2.5} & 54.1\scriptsize{$\pm$0.4} & 67.1\scriptsize{$\pm$0.4} & 60.3 (-13.3) & \textbf{73.6} \\
        MeZO‐SVRG & 39.5\scriptsize{$\pm$0.0} & 56.7\scriptsize{$\pm$2.5} & 55.2\scriptsize{$\pm$0.1} & 67.1\scriptsize{$\pm$0.5} & 54.6 (+0.0) & 54.6\\
        LOZO & 41.9\scriptsize{$\pm$3.4} & 69.4\scriptsize{$\pm$3.7} & 53.1\scriptsize{$\pm$0.0} & 70.9\scriptsize{$\pm$2.2} & 58.8 (-0.3) & 59.1\\
        HiZOO & 44.1\scriptsize{$\pm$1.9} & 64.8\scriptsize{$\pm$1.9} & 63.5\scriptsize{$\pm$1.9} & 67.7\scriptsize{$\pm$0.2} & 60.0 (-4.2) & 64.2\\
        \midrule
        \loren & 44.3\scriptsize{$\pm$1.4} & 76.3\scriptsize{$\pm$1.5} & 86.1\scriptsize{$\pm$3.1} & 73.8\scriptsize{$\pm$0.4} & \textbf{70.1} & --\\
        \bottomrule
    \end{tabular}
    \label{tab:2pass_result}
\end{table}

\section{Ablation Study} \label{apdx:ablation}
We conduct an ablation study to assess \loren's sensitivity to three key hyperparameters: 
(i) the learning rate $\nu$ for the covariance parameter $\vec{a}$, 
(ii) the damping parameter $\rho$, and 
(iii) the number of forward passes $K$ per iteration. 
We evaluated the test accuracy of the RoBERTa-large model on the QNLI tasks across 5 independent runs under various configurations.

\begin{figure}[tb]
    \centering
    \includegraphics[width=0.95\textwidth]{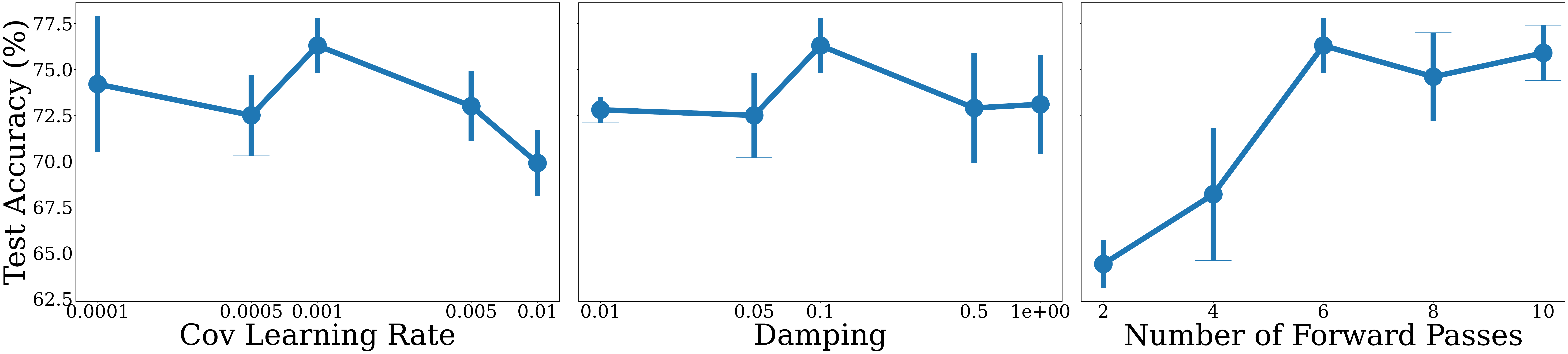}
    \caption{Fine-tuning results of QNLI task on RoBERTa-large with varying (Left) covariance learning rate, (Center) damping, and (Right) number of forward passes.}
    \label{fig:ablation}
\end{figure}

Figure~\ref{fig:ablation} presents the results. For the learning rate $\nu$, test accuracy peaks at $\nu = 0.001$ but declines beyond this point, indicating a trade-off between adaptation speed and model performance. The damping parameter $\rho$ achieves optimal performance at $\rho = 0.1$, with lower and higher values limiting adaptation. Finally, test accuracy consistently improves with an increasing number of forward passes $K$ until $K=6$, beyond which it plateaus, reflecting diminishing returns.

\section{Experimental Details} 

\subsection{Datasets and Implementation} \label{apdx:details}
Following \cite{gautam2024variancereduced}, we focus on fine-tuning LLMs for text classification tasks using datasets from the GLUE and SuperGLUE benchmarks. Specifically, we use full-precision (FP32) for DistilBERT, RoBERTa-large, GPT-2-XL, and OPT-2.7B, and half-precision (BF16) for LLaMA-3-8B and OPT-13B to accommodate GPU memory constraints.

We trained on 512 randomly sampled examples and evaluate on 256 validation examples, reporting validation accuracy as a proxy for test accuracy since test labels for both GLUE and SuperGLUE tasks are unavailable. Early stopping was applied, given that ZO optimizers generally exhibit diminishing returns in performance with increased iterations after convergence. For consistency, we set the number of forward passes to 6 across all ZO optimizers, aligning with \loren’s optimal configuration for RLOO gradient estimation.

\subsection{Hyperparameter Configurations}
We present the hyperparameter configurations used for fine-tuning the six language models (DistilBERT, RoBERTa-large, GPT-2-XL, OPT-2.7B, LLaMA-3-8B, and OPT-13B). Each table below provides detailed hyperparameter settings for each ZO optimizer, including MeZO, MeZO-Adam, MeZO-SVRG, LOZO, HiZOO, and \loren. The configurations were carefully selected through grid search, and the bold values indicate the settings used to generate the final results.

Table~\ref{tab:hyperpar_distilbert} summarizes the hyperparameter settings for fine-tuning DistilBERT, where the batch size, learning rate, perturbation smoothing, total steps, and other important parameters were optimized for each optimizer. Similar configurations were applied to RoBERTa-large, GPT-2-XL, OPT-2.7B, LLaMA-3-8B, and OPT-13B with the specific settings provided in Tables~\ref{tab:hyperpar_roberta},\ref{tab:hyperpar_gpt}, \ref{tab:hyperpar_opt}, and \ref{tab:hyperpar_llama} respectively. These hyperparameter settings ensure a fair comparison across all ZO optimizers, allowing each method to fully leverage its algorithmic strengths.

\begin{table}[ht]
\centering
\caption{The hyperparameter configurations used for fine-tuning DistilBERT, with bold values indicating the settings applied to generate the final results.}
\begin{tabular}{llc}
\toprule
\textbf{Algorithm} & \textbf{Hyperparameters} & \textbf{Values} \\
\midrule
\multirow{4}{*}{MeZO} & Batch size & $64$ \\
                      & Learning rate & $\{1e-4, 5e-5, \mathbf{1e-5}, 5e-6, 1e-6\}$ \\
                      & $\epsilon$ & $1e-3$ \\
                      & Total Steps & $24,000$ \\
\midrule
\multirow{5}{*}{MeZO-Adam} & Batch size & $64$ \\
                      & Learning rate & $\{1e-3, 5e-4, \mathbf{1e-4}, 5e-5, 1e-5\}$ \\
                      & Betas & $(0.9, 0.999)$ \\
                      & $\epsilon$ & $1e-3$ \\
                      & Total Steps & $24,000$ \\
\midrule
\multirow{6}{*}{MeZO-SVRG} & Batch size & $64$ \\
                           & Learning rate (Full-batch) & $\{1e-2, 5e-3, \mathbf{1e-3}, 5e-4, 1e-4\}$\\
                           & Learning rate (Mini-batch) & $\{1e-5, 5e-6, \mathbf{1e-6}, 5e-7, 1e-7\}$ \\
                           & $\epsilon$ & $1e-3$ \\
                           & Frequency of Full-batch Update & $2$ \\
                           & Total Steps & $24,000$ \\
\midrule
\multirow{6}{*}{LOZO} & Batch size & $64$ \\
                      & Learning rate & $\{1e-4, 5e-5, \mathbf{1e-5}, 5e-6, 1e-6\}$ \\
                      & Rank & $\{\mathbf{2}, 4, 8\}$ \\
                      & Interval & $\{\mathbf{50}, 100\}$ \\
                      & $\epsilon$ & $1e-3$ \\
                      & Total Steps & $24,000$ \\
\midrule
\multirow{5}{*}{HiZOO} & Batch size & $64$ \\
                      & Learning rate & $\{1e-3, 5e-4, \mathbf{1e-4}, 5e-5, 1e-5\}$ \\
                      & Hessian Smoothing & $1e-8$ \\
                      & $\epsilon$ & $1e-3$ \\
                      & Total Steps & $24,000$ \\ 
\midrule
\multirow{6}{*}{\loren} & Batch size & $64$ \\
                      & Learning rate ($\eta$) & $\{1e-4, 5e-5, \mathbf{1e-5}, 5e-6, 1e-6\}$ \\
                      & Learning rate ($\nu$) & $1e-3$ \\
                      & Damping & $1e-2$ \\
                      & $\epsilon$ & $1e-3$ \\
                      & Total Steps & $24,000$ \\    
\bottomrule
\end{tabular}
\label{tab:hyperpar_distilbert}
\end{table}

\begin{table}[tb]
\centering
\caption{The hyperparameter configurations used for fine-tuning RoBERTa-large, with bold values indicating the settings applied to generate the final results.}
\begin{tabular}{llc}
\toprule
\textbf{Algorithm} & \textbf{Hyperparameters} & \textbf{Values} \\
\midrule
\multirow{4}{*}{MeZO} & Batch size & $64$ \\
                      & Learning rate & $\{1e-4, 5e-5, \mathbf{1e-5}, 5e-6, 1e-6\}$ \\
                      & $\epsilon$ & $1e-3$ \\
                      & Total Steps & $20,000$ \\
\midrule
\multirow{5}{*}{MeZO-Adam} & Batch size & $64$ \\
                      & Learning rate & $\{1e-3, 5e-4, 1e-4, \mathbf{5e-5}, 1e-5\}$ \\
                      & Betas & $(0.9, 0.999)$ \\
                      & $\epsilon$ & $1e-3$ \\
                      & Total Steps & $20,000$ \\
\midrule
\multirow{6}{*}{MeZO-SVRG} & Batch size & $64$ \\
                           & Learning rate (Full-batch) & $\{1e-4, 5e-5, \mathbf{1e-5}, 5e-6, 1e-6\}$\\
                           & Learning rate (Mini-batch) & $\{1e-5, 5e-6, \mathbf{1e-6}, 5e-7, 1e-7\}$ \\
                           & $\epsilon$ & $1e-3$ \\
                           & Frequency of Full-batch Update & $2$ \\
                           & Total Steps & $20,000$ \\
\midrule
\multirow{6}{*}{LOZO} & Batch size & $64$ \\
                      & Learning rate & $\{1e-4, 5e-5, \mathbf{1e-5}, 5e-6, 1e-6\}$ \\
                      & Rank & $\{\mathbf{2}, 4, 8\}$ \\
                      & Interval & $\{\mathbf{50}, 100\}$ \\
                      & $\epsilon$ & $1e-3$ \\
                      & Total Steps & $20,000$ \\
\midrule
\multirow{5}{*}{HiZOO} & Batch size & $64$ \\
                      & Learning rate & $\{1e-3, 5e-4, 1e-4, \mathbf{5e-5}, 1e-5\}$ \\
                      & Hessian Smoothing & $1e-8$ \\
                      & $\epsilon$ & $1e-3$ \\
                      & Total Steps & $20,000$ \\ 
\midrule
\multirow{6}{*}{\loren} & Batch size & $64$ \\
                      & Learning rate ($\eta$) & $\{1e-4, 5e-5, 1e-5, \mathbf{5e-6}, 1e-6\}$ \\
                      & Learning rate ($\nu$) & $1e-3$ \\
                      & Damping & $1e-1 $ \\
                      & $\epsilon$ & $1e-3$ \\
                      & Total Steps & $20,000$ \\    
\bottomrule
\end{tabular}
\label{tab:hyperpar_roberta}
\end{table}

\begin{table}[tb]
\centering
\caption{The hyperparameter configurations used for fine-tuning GPT-2-XL, with bold values indicating the settings applied to generate the final results.}
\begin{tabular}{llc}
\toprule
\textbf{Algorithm} & \textbf{Hyperparameters} & \textbf{Values} \\
\midrule
\multirow{4}{*}{MeZO} & Batch size & $64$ \\
                       & Learning rate & $\{1e-4, 5e-5, 1e-5, \mathbf{5e-6}, 1e-6\}$ \\
                      & $\epsilon$ & $1e-3$ \\
                      & Total Steps & $8,000$ \\
\midrule
\multirow{5}{*}{MeZO-Adam} & Batch size & $64$ \\
                      & Learning rate & $\{1e-3, 5e-4, \mathbf{1e-4}, 5e-5, 1e-5\}$ \\
                      & Betas & $(0.9, 0.999)$ \\
                      & $\epsilon$ & $1e-3$ \\
                      & Total Steps & $8,000$ \\
\midrule
\multirow{6}{*}{MeZO-SVRG} & Batch size & $64$ \\
                           & Learning rate (Full-batch) & $\{1e-4, \mathbf{5e-5}, 1e-5, 5e-6, 1e-6\}$\\
                           & Learning rate (Mini-batch) & $\{1e-5, 5e-6, \mathbf{1e-6}, 5e-7, 1e-7\}$ \\
                           & $\epsilon$ & $1e-3$ \\
                           & Frequency of Full-batch Update & $2$ \\
                           & Total Steps & $8,000$ \\
\midrule
\multirow{6}{*}{LOZO} & Batch size & $64$ \\
                      & Learning rate & $\{1e-4, 5e-5, 1e-5, \mathbf{5e-6}, 1e-6\}$ \\
                      & Rank & $\{\mathbf{2}, 4, 8\}$ \\
                      & Interval & $\{\mathbf{50}, 100\}$ \\
                      & $\epsilon$ & $1e-3$ \\
                      & Total Steps & $8,000$ \\
\midrule
\multirow{5}{*}{HiZOO} & Batch size & $64$ \\
                      & Learning rate & $\{1e-3, 5e-4, 1e-4, \mathbf{5e-5}, 1e-5\}$ \\
                      & Hessian Smoothing & $1e-8$ \\
                      & $\epsilon$ & $1e-3$ \\
                      & Total Steps & $8,000$ \\
\midrule
\multirow{6}{*}{\loren} & Batch size & $64$ \\
                      & Learning rate ($\eta$) & $\{\mathbf{1e-5}, 5e-6, 1e-6, 5e-7, 1e-7\}$ \\
                      & Learning rate ($\nu$) & $1e-3$ \\
                      & Damping & $1e-1$ \\
                      & $\epsilon$ & $1e-3$ \\
                      & Total Steps & $8,000$ \\    
\bottomrule
\end{tabular}
\label{tab:hyperpar_gpt}
\end{table}

\begin{table}[tb]
\centering
\caption{The hyperparameter configurations used for fine-tuning OPT-2.7B, with bold values indicating the settings applied to generate the final results.}
\begin{tabular}{llc}
\toprule
\textbf{Algorithm} & \textbf{Hyperparameters} & \textbf{Values} \\
\midrule
\multirow{4}{*}{MeZO} & Batch size & $64$ \\
                       & Learning rate & $\{1e-4, 5e-5, 1e-5, \mathbf{5e-6}, 1e-6\}$ \\
                      & $\epsilon$ & $1e-3$ \\
                      & Total Steps & $8,000$ \\
\midrule
\multirow{5}{*}{MeZO-Adam} & Batch size & $64$ \\
                      & Learning rate & $\{1e-3, 5e-4, 1e-4, \mathbf{5e-5}, 1e-5\}$ \\
                      & Betas & $(0.9, 0.999)$ \\
                      & $\epsilon$ & $1e-3$ \\
                      & Total Steps & $8,000$ \\
\midrule
\multirow{6}{*}{MeZO-SVRG} & Batch size & $64$ \\
                           & Learning rate (Full-batch) & $\{1e-4, \mathbf{5e-5}, 1e-5, 5e-6, 1e-6\}$\\
                           & Learning rate (Mini-batch) & $\{1e-5, 5e-6, \mathbf{1e-6}, 5e-7, 1e-7\}$ \\
                           & $\epsilon$ & $1e-3$ \\
                           & Frequency of Full-batch Update & $2$ \\
                           & Total Steps & $8,000$ \\
\midrule
\multirow{6}{*}{LOZO} & Batch size & $64$ \\
                      & Learning rate & $\{1e-4, 5e-5, 1e-5, \mathbf{5e-6}, 1e-6\}$ \\
                      & Rank & $\{\mathbf{2}, 4, 8\}$ \\
                      & Interval & $\{\mathbf{50}, 100\}$ \\
                      & $\epsilon$ & $1e-3$ \\
                      & Total Steps & $8,000$ \\
\midrule
\multirow{5}{*}{HiZOO} & Batch size & $64$ \\
                      & Learning rate & $\{1e-4, 5e-5, 1e-5, \mathbf{5e-6}, 1e-6\}$ \\
                      & Hessian Smoothing & $1e-8$ \\
                      & $\epsilon$ & $1e-3$ \\
                      & Total Steps & $8,000$ \\
\midrule
\multirow{6}{*}{\loren} & Batch size & $64$ \\
                      & Learning rate ($\eta$) & $\{1e-5, 5e-6, 1e-6, \mathbf{5e-7}, 1e-7\}$ \\
                      & Learning rate ($\nu$) & $1e-3$ \\
                      & Damping & $1e-1$ \\
                      & $\epsilon$ & $1e-3$ \\
                      & Total Steps & $8,000$ \\    
\bottomrule
\end{tabular}
\label{tab:hyperpar_opt}
\end{table}

\begin{table}[tb]
\centering
\caption{The hyperparameter configurations used for fine-tuning LLaMA-3 8B and OPT-13B, with bold values indicating the settings applied to generate the final results.}
\begin{tabular}{llc}
\toprule
\textbf{Algorithm} & \textbf{Hyperparameters} & \textbf{Values} \\
\midrule
\multirow{4}{*}{MeZO} & Batch size & $64$ \\
                       & Learning rate & $\{1e-5, 5e-6, \mathbf{1e-6}, 5e-7, 1e-7\}$ \\
                      & $\epsilon$ & $1e-3$ \\
                      & Total Steps & $4,000$ \\
\midrule
\multirow{5}{*}{MeZO-Adam} & Batch size & $64$ \\
                      & Learning rate & $\{1e-3, 5e-4, 1e-4, \mathbf{5e-5}, 1e-5\}$ \\
                      & Betas & $(0.9, 0.999)$ \\
                      & $\epsilon$ & $1e-3$ \\
                      & Total Steps & $4,000$ \\
\midrule
\multirow{6}{*}{LOZO} & Batch size & $64$ \\
                      & Learning rate & $\{1e-5, 5e-6, 1e-6, \mathbf{5e-7}, 1e-7\}$ \\
                      & Rank & $\{\mathbf{2}, 4, 8\}$ \\
                      & Interval & $\{\mathbf{50}, 100\}$ \\
                      & $\epsilon$ & $1e-3$ \\
                      & Total Steps & $4,000$ \\
\midrule
\multirow{5}{*}{HiZOO} & Batch size & $64$ \\
                      & Learning rate & $\{1e-5, 5e-6, \mathbf{1e-6}, 5e-7, 1e-7\}$ \\
                      & Hessian Smoothing & $1e-8$ \\
                      & $\epsilon$ & $1e-3$ \\
                      & Total Steps & $4,000$ \\
\midrule
\multirow{6}{*}{\loren} & Batch size & $64$ \\
                      & Learning rate ($\eta$) & $\{1e-5, 5e-6, 1e-6, \mathbf{5e-7}, 1e-7\}$ \\
                      & Learning rate ($\nu$) & $1e-3$ \\
                      & Damping & $1e-1$ \\
                      & $\epsilon$ & $1e-3$ \\
                      & Total Steps & $4,000$ \\    
\bottomrule
\end{tabular}
\label{tab:hyperpar_llama}
\end{table}

\end{document}